\DeclareMathAlphabet\mathbfcal{OMS}{cmsy}{b}{n}
\let\oldnl\nl
\newcommand{\nonl}{\renewcommand{\nl}{\let\nl\oldnl}}
\renewcommand\footnotetextcopyrightpermission[1]{}
\begin{document}

\title[Suppressing Noise from Built Environment Datasets to Reduce Communication Rounds]{Suppressing Noise from Built Environment Datasets to Reduce Communication Rounds for Convergence of Federated Learning}
\author{Rahul Mishra, Hari Prabhat Gupta, Tanima Dutta, and *Sajal K. Das}
\affiliation{%
 \institution{Dept. of CSE, IIT (BHU) Varanasi, India and *Dept. of Computer Science, Missouri S\&T, Rolla, USA}
}
\email{{rahulmishra.rs.cse17, hariprabhat.cse, tanima.cse}@iitbhu.ac.in,  and  *sdas@mst.edu}

\begin{abstract}
Smart sensing provides an easier and convenient data-driven mechanism for monitoring and control in the built environment. Data generated in the built environment are privacy sensitive and limited. 
Federated learning is an emerging paradigm that provides privacy-preserving collaboration among multiple participants for model training without sharing private and limited data. The noisy labels in the datasets of the participants degrade the performance and increase the number of communication rounds for convergence of federated learning. Such large communication rounds require more time and energy to train the model. In this paper, we propose a federated learning approach to suppress the unequal distribution of the noisy labels in the dataset of each participant. The approach first estimates the noise ratio of the dataset for each participant and normalizes the noise ratio using the server dataset. The proposed approach can handle bias in the server dataset and minimizes its impact on the participants' dataset. Next, we calculate the optimal weighted contributions of the participants using the normalized noise ratio and influence of each participant. We further derive the expression to estimate the number of communication rounds required for the convergence of the proposed approach. Finally, experimental results demonstrate the effectiveness of the proposed approach over existing techniques in terms of the communication rounds and achieved performance in the built environment. 
\end{abstract}

\begin{CCSXML}
<ccs2012>
    <concept>
    <concept_id>10010147.10010257</concept_id>
    <concept_desc>Computing methodologies~Machine learning</concept_desc>
    <concept_significance>500</concept_significance>
    </concept>
 <concept>
<concept_id>10003120.10003138.10003141</concept_id>
<concept_desc>Human-centered computing~Ubiquitous and mobile devices</concept_desc>
<concept_significance>500</concept_significance>
</concept>
</ccs2012>
 </ccs2012>
\end{CCSXML}
\ccsdesc[500]{Computing methodologies~Machine learning}
\ccsdesc[500]{Human-centered computing~Ubiquitous and mobile computing}
\maketitle

\section{Introduction}
Built environment is a human-made environment that provides space to live, work, and recreate~\cite{builtenv}. Integrating the built environment with smart sensing generates datasets that can be exploited for effective monitoring and control. Such datasets are privacy sensitive as they possess extensive characterization of the built environment. They are also limited due to privacy concerns and a lack of smart sensing infrastructure in the built environment. 

Federated Learning (FL) collaboratively trains a shared model on the server using multiple participants while keeping local and limited data decentralized to preserve privacy~\cite{mcmahan2017}. Prior studies utilized FL in various built environment applications, including the (electrical) power load forecasting~\cite{gao2021decentralized} and anomaly detection~\cite{sater2021federated}. Noisy labels in the datasets of the participants are one of the key challenges that decelerate the broader acceptability of FL. They degrade the performance of the trained model~\cite{yang2020robust} and increase the communication rounds between the participants and the server for convergence of the FL~\cite{9412599}. The noisy labels also deteriorate the quality of the Weight Parameter Matrices (WPM) of the participants. Additionally, handling the unequal distribution of noisy labels among different classes becomes extremely challenging. 

There exists literature on handling noisy labels in participants' dataset to optimize the weighted contribution during aggregation~\cite{chen2020focus, li2021federated, xue2021toward}. However, training in the presence of noisy label instances in the dataset impacts the performance of the model. Prior studies removed the noisy label instances from the participants' dataset~\cite{9412599, yang2020robust, Han2020} to minimize the impact of noise in FL. The mechanism of sanitary checks to classify the labels as noisy or noise-free is proposed in~\cite{Han2020}, where only a few noise-free instances were used in the model training on each participant. Such few instances increase the communication rounds for convergence and degrade the performance. Exchanging the centroids between the participants and the server~\cite{9412599} to remove the noise reveals the characteristics of the dataset, which results in privacy compromise. Determining the noisy labels using a single model trained by a third party~\cite{yang2020robust} reduces the significance of the personalization. Further, there exist methods~\cite{chen19g,hendrycks2018, yu2018efficient} in centralized training to handle noisy labels; such methods do not perform well in FL due to the customary smaller sizes of the participant datasets and privacy concerns. 

The number of communication rounds and the duration of each round determine the convergence time of FL. The duration of each round depends on the local epochs and the number of participants. Therefore, the number of communication rounds help estimate the required number of local epochs and participants for achieving the desired accuracy of FL in fixed time duration. The authors in~\cite{li2019convergence,luo2021cost,yang2021achieving} estimated the number of communication rounds for the convergence of FL, assuming all the samples have a similar impact on the training. The assumption of a similar impact may not be valid in the presence of noisy labels. This is because the random selection of samples also has noisy labels, deteriorating the performance of the trained model. For example, FedProx~\cite{li2020federated} and FedNova~\cite{tackling} exhibit communication rounds and accuracy conflict, where faster convergence (i.e., fewer communication rounds) come at the expense of accuracy. While existing approaches~\cite{li2019convergence,luo2021cost,yang2021achieving, li2020federated, tackling} estimate the communication rounds for convergence of FL with noise-free labels in the participants' local dataset, estimating the communication rounds (i.e., the convergence time) in the presence of noisy labels with a given accuracy threshold is an important challenging problem in FL.

This paper proposes {\bf Fed-NL}, a novel \underline{Fed}erated learning approach to minimize the impact of unequal distribution of \underline{N}oisy \underline{L}abels in the participants dataset in built environments. Fed-NL estimates the noise ratio of the dataset for each participant and normalizes the unequal distribution of noisy labels, assuming the server has a noise-free dataset, as in existing works~\cite{9412599, chen2020focus, yang2020robust,9632275}. Each participant demands class-wise data samples from the server without revealing the characteristics of the local dataset. Fed-NL also estimates the communication rounds required for the convergence of FL in the presence of noisy labels.\\ Specifically, we investigate the following problem: \textit{how to minimize the impact of unequal distribution of noisy labels in the dataset of each participant while estimating the communication rounds for the convergence of FL?}

\vspace{2pt}
\noindent{\bf Contributions and novelty:} The main contributions and novelty of this work are as follows:

\vspace{2pt}
\noindent 1) The proposed approach Fed-NL estimates the noise ratio in the dataset of each participant without using any prior information on noise concentration. Each participant randomly splits its dataset into multiple disjoint sub-datasets; a model is trained on one of the sub-datasets and tested on the remaining ones. Such training and testing operations for all sub-datasets are repeated to obtain a dataset with noise-free labels, thus estimating the noise ratio. This estimation is independent of server, third parties, or other participants in FL.

\vspace{2pt}
\noindent 2) Fed-NL normalizes the estimated noise ratio to eliminate the unequal distribution of noisy labels. The class having the least noise ratio is a threshold for the remaining classes, which demand noise-free data samples from the server to achieve the threshold. The normalization reduces the noisy labels on each participant without revealing the characteristics of the local dataset.  

\vspace{2pt}
\noindent 3) Fed-NL determines the least number of samples the server transfers to a selected participant against a given class. Afterward, the participant fetches class-wise data samples from the server with the least sample count as the threshold. It neutralizes the impact of bias in the server dataset on the participant dataset.

\vspace{2pt}
\noindent 4) Fed-NL estimates the optimal weighted contribution of each participant by incorporating the normalized noise ratio and influence of participants~\cite{xue2021toward}. The optimal contribution provides robustness against the noise during aggregation. Expressions are derived to analyze the convergence of Fed-NL and the communication rounds using the following function:

\begin{equation}\label{e0}
 \mathcal{R}= f\Big(E,N,q_o,\epsilon_i,\mathbb{E}[\nabla\mathcal{L}_i(\cdot,\cdot)]\Big)
\end{equation}

\noindent
where $E$, $N$, $q_o$, and $\epsilon_i$ respectively denote the number of local epochs, the number of participants, the precision threshold, and the contribution of each participant $i \in \{1\le i \le N\}$. Here $\mathbb{E}[\nabla\mathcal{L}_i(\cdot,\cdot)]$ is the expected value of loss function gradient using randomly chosen data samples in the presence of noisy labels. (For details on the expression for communication rounds $\mathcal{R}$, see Theorem~\ref{theorem}.)  

\vspace{2pt}
\noindent 5) The performance of Fed-NL approach is evaluated on our collected and existing datasets in transportation~\cite{shl2} and human activity recognition~\cite{anguita2013public}. Experimental results show that Fed-NL provides robustness against noise and successfully estimates the communication rounds in the built environments. Fed-NL is also evaluated on existing image datasets~\cite{caldas2018leaf, krizhevsky2009learning} used in federated learning.

\vspace{2pt}
The paper is organized as follows.
Section~\ref{related-work} reviews the related work and Section~\ref{problem} summarizes preliminary concepts. Sections~\ref{flapproach} and ~\ref{round} respectively describe the Fed-NL approach for handling noisy labels in the datasets, and estimates the communication rounds required for convergence of Fed-NL. Section~\ref{evaluation} evaluates the performance of Fed-NL and Section~\ref{conc} concludes the paper.

\section{Related work}\label{related-work}

This section discusses the existing works on how to handle noisy labels in the participant datasets and estimate the communication rounds for convergence of FL. 

\subsection{Handling noisy labels in FL}
Existing literature on handling noisy labels typically uses the following mechanisms: a) optimizing weighted contributions, b) reducing noisy labels, and c) learning with noisy labels. 

The weighted contribution of the participants during aggregation at the server can be optimized to mitigate the negative impact of noisy labels in FL~\cite{chen2020focus, li2021federated, xue2021toward}. In~\cite{chen2020focus}, the authors highlighted the issue of noisy labels in the dataset and computed mutual cross-entropy loss among the participants and the server to assign different weight contributions. However, the technique could not handle bias due to the unequal distribution of noisy labels in the dataset. In~\cite{li2021federated}, the authors proposed a framework to incorporate a data quality measurement mechanism, thereby providing robust aggregation against noisy labels in the training dataset. In the proposed framework, the noisy data instances remained confined to the participants, which hampered the performance of the trained model. The influence of participants in FL is estimated in~\cite{xue2021toward} to quantify the impact on the weighted contribution. The authors removed less influential participants from FL, where eliminated participants could not improve their performance. 

The negative impact of noisy labels in the dataset can be reduced by identifying and removing the instances with such labels~\cite{9412599, yang2020robust, Han2020}. The authors in~\cite{9412599} introduced a technique to select relevant data for model training on the participants. However, exchanging the centroids between the participants and the server~\cite{9412599} to remove the noisy labels reveals the characteristics of the dataset, which results in privacy compromise. The authors in~\cite{yang2020robust} highlighted the incorrect annotation of participants' datasets and proposed a method for choosing consistent decision boundaries to identify noisy and noise-free samples. Determining consistent decision boundary is effective, but identifying the noisy labels using a single unified model trained on a third party reduced the significance of the personalization in FL. A mechanism of sanitary checks is introduced in~\cite{Han2020} to classify the class labels as noisy or noise-free. The authors utilized only a few noise-free instances for training the model using the student-teacher training approach. However, removing noisy labels from the dataset and working with limited samples increases the local epochs for training and leads to performance compromise of FL~\cite{yang2020robust}. Additionally, model training with the unequal noise distribution among the participants and server (\textit{i.e.,} students and teacher) degrade the performance of FL.

In prior studies, several methods have been designed to provide effective learning despite noisy labels in the dataset~\cite{chen19g,hendrycks2018}. In~\cite{hendrycks2018}, a set of trusted data with clean labels is used to resolve the negative impact of noisy labels. A classifier is initially built using limited and noise-free data, followed by training on all the instances. However, considering noisy data after initial training on noise free data hampered the performance of the trained model in FL. The classification problem with training instances having randomly corrupted labels is addressed in~\cite{7159100}, where the authors introduced the importance of data instances re-weighting technique to best utilize the loss functions designed for traditional classification problems. Similar to~\cite{hendrycks2018}, the training in FL used noisy labeled data instances, which degraded the performance of the trained model.  

\subsection{Communication rounds in FL}  
The existing work have estimated the number of communication rounds required for convergence of FL~\cite{li2019convergence, luo2021cost, yang2021achieving}. As regards to the convergence of FL, the authors in~\cite{li2019convergence} estimated the number of communication rounds in FL, considering participants' datasets with noise-free labels. Similarly, the authors in~\cite{luo2021cost} analyzed the role of three control variables in FL, \textit{i.e.}, the communication rounds, local epochs, and the number of participants to determine the number of communication rounds for convergence of FL. The derived expression for communication round established a reciprocating relation between communication rounds and local epochs, \textit{i.e.} when local epochs increased, the number of communication round decreased. The authors in~\cite{yang2021achieving} extended the estimation of communication rounds in~\cite{li2019convergence} for iid and non-iid datasets with partial and complete involvement of all the participants. They proved that the linear speedup is achieved under both iid and non-iid assumptions. 

The estimation of communication rounds for convergence of FL in~\cite{li2019convergence,luo2021cost,yang2021achieving} is applicable for participants' datasets with noise free labels, where the assumption for a similar impact of all the samples in the dataset is valid. However, such an assumption of a similar impact may not be applicable for participants' datasets with noisy labels. Therefore, estimating communication rounds with noisy labels is tedious and requires redefining the variables in expression for communication rounds of FL with noise-free labels.

\section{Preliminaries}\label{problem}
This work considers an FL scenario for multi-class classification problem with $c$ classes of set $[c]=\{1,2,\cdots c\}$. A set of $N$ participants in FL is denoted by $[p]=\{p_1, p_2, \cdots, p_N\}$. Each participant $p_i$ has dataset $\mathcal{D}_i$ with $n_i$ number of instances and classes $[c]$, where $1\le i \le N$. Similarly, let $\mathcal{D}^s$ denotes the server dataset with classes $[c]$. Let $\mathcal{D}_{ik}$ and $\mathcal{D}^s_k$ denote the sub-datasets of $\mathcal{D}_{i}$ and $\mathcal{D}^s$ for class $k$, respectively, $\forall k\in[c]$. This work assumes that $\mathcal{D}_i$ may have noisy labels, $\forall i \in (1\le i \le N)$. Let $(\mathbf{x}_{ij}, y_{ij})$ denotes an instance of dataset $\mathcal{D}_i$, where $1\le j \le n_i$. Let $\bar{y}_{ij}$ and $y^o_{ij}$ denote the true and predicted labels by the model of an instance $\mathbf{x}_{ij}$, respectively, where $\bar{y}_{ij}, y^o_{ij} \in [c]$. An instance $\mathbf{x}_{ij} \in \mathcal{D}_i$ has noise-free label, if the observed $y_{ij}$ matches with $\bar{y}_{ij}$; however, $\bar{y}_{ij}$ is usually unavailable. Let $\mathbf{x}_{ij}$ denotes an instance of dataset $\mathcal{D}_i$ at participant $p_i$ having class label $y_{ij}$, where $1\le i \le N$, $1\le j \le n_i$, and $y_{ij}\in [c]$. The instance $\mathbf{x}_{ij}$ is said to have noisy label if either of the following conditions hold: a) $y_{ij}$ is mislabeled as any other class of $\mathcal{D}_i$, \textit{i.e.}, $y_{ij} \in [c]-\bar{y}_{ij}$, where $\bar{y}_{ij}$ denotes the true class label. b) $y_{ij}$ is an arbitrary class label, \textit{i.e.,} $y_{ij} \notin [c]$. This work assumes that the server comprises a dataset with noise-free labels. The existing literature in FL inspires this assumption~\cite{9412599, chen2020focus, yang2020robust, 9632275}, where the server is assumed to possess a dataset. Moreover, the assumption does not reveal the characteristics of participant datasets; thus, it maintains data privacy in FL.

\noindent $\bullet$ \textbf{Federated learning:} Each participant $p_i$ in FL receives a model from the server and trains it using dataset $\mathcal{D}_i$. The training is performed for $E$ number of local epochs and minimizes the loss function $\mathcal{L}_i(\mathbf{w}_i,\mathcal{D}_i)$ of participant $p_i$, where $\mathbf{w}_i$ is the WPM of $p_i$. $\mathcal{L}_i(\mathbf{w}_i,\mathcal{D}_i)$ is estimated as: $\mathcal{L}_i(\mathbf{w}_i,\mathcal{D}_i)=\frac{1}{n_i}\sum_{j\leftarrow 1}^{n_i} \mathcal{L}_{ij}(\mathbf{w}_{ij},\mathbf{x}_{ij})$, where  $\mathbf{x}_{ij}\in \mathcal{D}_i$, $1\le j \le n_i$, and $1\le i \le N$. Participant $p_i$ estimates $\mathcal{L}_i(\cdot,\cdot)$ and transfers $\mathcal{L}_i(\cdot,\cdot)$ and $\mathbf{w}_i$ to the server for global aggregation. Next, the server estimates global loss function as $\mathcal{L}(\mathbf{w},\mathcal{D}^s)=\sum_{i\leftarrow 1}^{N} \Big(\frac{n_i}{\mathcal{N}}\Big)\mathcal{L}_i(\mathbf{w}_i,\mathcal{D}_i)$ and WPM $\mathbf{w}=\sum_{i\leftarrow 1}^{N} \Big(\frac{n_i}{\mathcal{N}}\Big)\mathbf{w}_i$ using weighted aggregation of the local losses and WPM of participants, where $\mathcal{N}=n_1+n_2+\cdots+n_N$. 

The server broadcasts the aggregated WPM to all the participants for the next round of training. The process of local training and global aggregation is orchestrated for $\mathcal{R}$ global iterations to achieve a trained model for all the participants. At each global iteration $t\in \mathcal{R}$ the local loss function and WPM are denoted as $\mathcal{L}_i^t(\cdot,\cdot)$ and $\mathbf{w}_i^t$ for participant $p_i$, respectively, where $1\le i \le N$. Similarly, the server has aggregated loss function and WPM as $\mathcal{L}^t(\cdot,\cdot)$ and $\mathbf{w}^t$ at iteration $t$, respectively. During local training for $E$ epochs the WPM $\mathbf{w}_i^t$ at $t$ ($t\in \mathcal{R}$) of participant $p_i$ is updated as: $\mathbf{w}_{i,k}^t = \mathbf{w}_{i,e-1}^t-\eta \nabla\mathcal{L}_i(\mathbf{w}_{i,e-1}^t, \mathcal{D}_i), \forall e \in E$, where $\eta$ is the learning rate and $\mathbf{w}_{i,0}^t$ is initialized as: $\mathbf{w}_{i,0}^t\leftarrow \mathbf{w}^{t-1}$. 

\noindent $\bullet$ \textbf{Influence of participants:} 
The influence of $p_i$ on aggregated WPM $\mathbf{w}^t$ at iteration $t$ ($t\in \mathcal{R}$), denoted as $\delta_i^t$, can be estimated by removing the WPM $\mathbf{w}_i^t$ from aggregation, where $1\le i \le N$. Let $[p^t]$ of $N$ denotes the set of participants available at iteration $t$. Let $\mathcal{N}([p^t]\textbackslash {p_i})$ denotes the sum of instances of all the participants excluding $p_i$ then aggregated WPM at iteration $t$ is given as~\cite{xue2021toward}:
\begin{align}\label{e4}
 \mathbf{w}^t([p^t] \textbackslash {p_i})&=\sum_{l\in \{[p^t]\textbackslash {p_i}\}}\Big(\frac{n_l}{\mathcal{N}([p^t]\textbackslash {p_i})}\Big)w^t_l.
 \end{align}

The influence $\delta_i^t$ of $p_i$ when it is permanently removed from the training is given as:
\begin{equation}\label{e5}
\delta_i^t= \mathbf{w}^t([p] \textbackslash {p_i}) - \mathbf{w}^t. 
\end{equation}

\noindent $\bullet$ \textbf{Distribution of noisy labels:}
There are two types of distribution of noisy labels in the dataset of the participants~\cite{chen19g}, \textit{i.e.}, symmetric and asymmetric. The data instances in the symmetric distribution of the noisy labels having the same distance to the class boundary are equally likely to be corrupted. The noise ratio of participant $p_i$, denoted as $\beta_i$, is the fraction of the noisy instances to the total instances of $p_i$. The noise transition matrix $P$ for symmetric distribution of the noisy labels is defined as: $P_{kk}=1-\beta_i$ and $P_{kl}=\beta_i/(c-1)$, where $k,l\in[c]$. 
In the asymmetric distribution, the data instances with the same class boundary distance are not equally likely to be corrupted. $P$ for the asymmetric distribution of noisy labels is given as: $\sum_{k,l \in [c]}P_{kl}=1$, where $P_{kk}>P_{kl}$, others $P_{kl}$ may be unequal, and  $\forall k\ne l$. 

\section{Fed-NL: \underline{Fed}erated Learning Approach for handling \underline{N}oisy \underline{L}abels}\label{flapproach}
This section presents the Federated Learning approach with Noisy Labels (Fed-NL). An overview of the Fed-NL is shown in Figure~\ref{overview1}. Fed-NL starts with the estimation of the noise ratio on each selected participant in FL. Next, we introduce the noise ratio normalization mechanism because the obtained noise ratio of a participant may not be equally distributed among all the class labels. We also determine the least data samples the server can transfer to each participant class-wise. Further, we estimate the optimal contribution of each participant as per the noise ratio and influence. We summarize these steps in Algorithm 1. Finally, we derive the expression to estimate the communication rounds $\mathcal{R}$ for convergence of Fed-NL.  

\begin{figure}[h]
\centering
\includegraphics[scale=0.76]{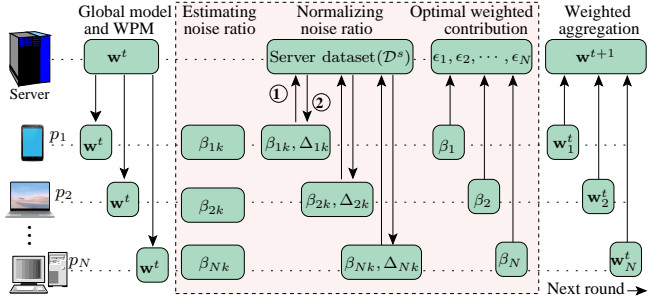}
\caption{Overview of Fed-NL. \textcircled{1}: data request to the server and \textcircled{2}: data provided by the server.} 
\label{overview1}
\end{figure}

\subsection{Estimating noise ratio at each participant}\label{noise-ratio}
Each participant in Fed-NL randomly splits its dataset into multiple disjoint sub-datasets; a model is trained on the one sub-dataset and tested on the remaining. The authors in~\cite{chen19g} split the dataset into two sub-datasets to determine noisy labels by alternatively training and testing on the first and second sub-dataset. It provided one predicted and one existing label to identify noise-free labels. However, it would not work if the existing and the predicted labels are noisy; the label is marked as noise-free. To overcome the above limitation, the estimation of noise ratio in Fed-NL starts with the random split of $\mathcal{D}_i$ of participant $p_i$ into three sub-datasets having a nearly equal instances, denoted as $\mathcal{D}_i^1$, $\mathcal{D}_i^2$, and $\mathcal{D}_i^3$, where $1\le i \le N$.

A participant $p_i$ initially trains the local model $M_i$ on $\mathcal{D}_i^1$ and the data instances of $\mathcal{D}_{i}^2$ and $\mathcal{D}_{i}^3$ are tested using the trained $M_i$, $\forall i \in \{1\le i \le N\}$. The testing operation results in a set of output labels for $\mathcal{D}_{i}^2$ and $\mathcal{D}_{i}^3$, respectively. $p_i$ next re-initializes and trains $M_i$ on $\mathcal{D}_{i}^2$ and tests $\mathcal{D}_{i}^1$ and $\mathcal{D}_{i}^3$ instances on trained $M_i$ to obtain set of output labels. $p_i$ finally re-initializes and trains $M_i$ on $\mathcal{D}_{i}^3$ and tests $\mathcal{D}_{i}^1$ and $\mathcal{D}_{i}^2$ to obtain set of output labels. Now, we have three labels for each instance of any sub-dataset, \textit{i.e.}, one existing and two predicted. An instance of a sub-dataset is noise-free if its existing label is the same as two predicted labels, as shown in Example~\ref{example1}. Fed-NL uses this mechanism to estimate class-wise noise-free labels in each sub-dataset and assign to a set $S_{ik}^q$, where $i\le i\le N$, $k\in[c]$, and $q\in\{1,2,3\}$. For example, all the instances with noise-free labels in $\mathcal{D}_{ik}^1$ are assigned to $S_{ik}^1$. Similarly, we determine $S_{ik}^2$ and $S_{ik}^3$. Further, we obtain a noise-free set $S_{ik}$ by taking the union of sets $S_{ik}^1$, $S_{ik}^2$, and $S_{ik}^3$, \textit{i.e.,} $S_{ik}=S_{ik}^1\cup S_{ik}^2\cup S_{ik}^3$. The removed set $R_{ik}$ is estimated by following expression: $R_{ik}= \mathcal{D}_{ik}-S_{ik}$. 

\begin{example}\label{example1}
Let us consider a binary classification task, where a model is trained on a noisy dataset of any participant. An instance of the dataset, can have the following possibilities of labels using the three-fold split of the dataset.
\begin{align}\label{exeq}
 \begin{matrix}
 \\
\text{Noisy}\\ 
\text{Noisy}\\ 
\text{Noisy}\\ 
\text{Noise-free} 
\end{matrix}\begin{bmatrix}
\text{Existing} & \text{Predicted 1} & \text{Predicted 2}\\
0 & 1 & 1 \\ 
0 & 1 & 0 \\ 
0 & 0 & 1 \\ 
\textbf{0} & \textbf{0} & \textbf{0} 
\end{bmatrix}.
\end{align}
We set the last row of \eqref{exeq} as noise-free as the existing label and predicted labels (Predicted 1 and Predicted 2) from alternative training and testing are same. In other rows, there is conflict among existing and predicted labels. The conflict is due to noisy labels or improper training owing to the insufficiency of the dataset. 
\end{example}
The noise ratio of participant $p_i$ for class $k$ is estimated as: $\beta_{ik}=\frac{|R_{ik}|}{|\mathcal{D}_{ik}|}$, where $|\cdot|$ is the cardinality. The participant $p_i$ estimates the noise ratio for all the classes. Similarly, all the participants estimate the class-wise noise ratio. Procedure~$1$ summarizes the steps involved in estimating the noise ratio in Fed-NL. 


\SetAlFnt{\small}
\begin{procedure}[t]
\label{procedure1}
\caption{() \textbf{1: Estimation of Noise Ratio at $p_i$.}}  
\KwIn{Dataset $\mathcal{D}_i$, local training epoch $E$\;}
\For{each class $k \in [c]$}{
Randomly split $\mathcal{D}_i$ in three sub-datasets ($\mathcal{D}_i^1$, $\mathcal{D}_i^2$, and $\mathcal{D}_i^3$)\;
\For{$j\leftarrow 1$ to $3$}{
$a \leftarrow j, b\leftarrow j+1,c \leftarrow j+2$\;
\If{$c\%4==0$}{
$c\leftarrow a-1$\;
}
\ElseIf{$b\%4==0$}{
$b\leftarrow a-2$
}
Select $\mathcal{D}_{ik}^b \in \mathcal{D}^b_i$ and $\mathcal{D}_{ik}^c \in \mathcal{D}^c_i$ \;
Train $M_i$ on $\mathcal{D}^a_i$ for $E$ epochs; /*$M_{i}$ is local model at $p_i$*/\\
$S_{ik}^a\leftarrow$\textit{Test\_fun} ($\mathcal{D}_{ik}^b$, $\mathcal{D}_{ik}^c$, $M_i$)\;
\smallskip
Re-initialize the trained model $M_i$\;
}
\smallskip
$S_{ik}\leftarrow S^1_{ik} \cup S^2_{ik} \cup S^3_{ik}$; /*Noise-free set*/\\
Set $R_{ik}\leftarrow \mathcal{D}_{ik}-S_{ik}$ and $\beta_{ik}\leftarrow \frac{|R_{ik}|}{|\mathcal{D}_{ik}|}$\; 
}
\Return $S_{ik}$, $R_{ik}$ and $\beta_{ik}$, where $1\le i \le N$ and $k\in[c]$\;

\nonl \textbf{Function} \textit{Test\_fun} (dataset $X_1$, dataset $X_2$, model $Y$)\\
\nonl \hspace{10pt}\textbf{begin} \nonumber \\
\nonl \hspace{20pt} \textbf{for} {each instance $\mathbf{x}_1 \in X_1$, $\mathbf{x}_2 \in X_2$, and $\mathbf{x}_1=\mathbf{x}_2$} \textbf{do} \\
\nonl \hspace{30pt} /*$y$ is the existing label of $\mathbf{x}_1$ or $\mathbf{x}_2$*/\\
\nonl \hspace{30pt} $Y$ predicts $y_1$ as label of $\mathbf{x}_1$, $Y$ predicts $y_2$ as label of $\mathbf{x}_2$\;
\nonl \hspace{30pt} \textbf{if} $y_1 = y$ \&\& $y_2 =y $ \textbf{then} \\
\nonl \hspace{40pt} Set $S\leftarrow append(\text{selected instance})$\;
\nonl \hspace{20pt}\Return $S$ \;
\nonl \hspace{10pt}\textbf{end}
\end{procedure}

\subsection{Class-wise normalization of noise ratio and data samples from server to participant}\label{normalize}
A class with minimum noise ratio $(\beta_{ik}=z_i)$ does-not demand data from the server, and its noise ratio acts as a threshold for other classes. The participant $p_i$ for other classes which have noise ratio more than $z_{i}$, demands a fraction $\mathcal{F}_{ik} \leftarrow (\beta_{ik}-z_i)$ of dataset $\mathcal{D}_{ik}$ from the server. 
The server uniformly at random selects $\mathcal{F}_{ik}.\mathcal{D}_{ik}$ portion of data instances from $\mathcal{D}_{k}^s$ and sends to the participant. However, the server dataset might be biased where data instances for some classes are low and high for others. Specifically, the server may be incompetent in fulfilling the demand of $|\mathcal{F}_{ik}.\mathcal{D}_{ik}|$ data instances, for a few classes $\in[c]$, from the participant $p_i$. It results in bias of the class instances on the participant. Therefore, we identify the least possible data instances the server could transfer to the participant among all classes, denoted as $\Delta_{ik}$, as shown in lines $8-17$ of Procedure~$2$. Further, the server transfers class-wise and uniformly at random selected $\Delta_{ik}$ data instances to the participant $p_i$ for classes having noise ratio $\ge z_{i}$.

After receiving $\Delta_{ik}$ data instances from the server, participant $p_i$ has the dataset $\widehat{S_{ik}}$ of class $k$, where $\widehat{S_{ik}} \leftarrow S_{ik}+\Delta_{ik}$. We impose a limit of $z_i$, on the fraction of the dataset demand to restrict a vast amount of data transmission from the server to the participants. It also helps in preserving some data instances local to the participants to retain the personalized features. The number of instances in $\widehat{S_{ik}}$ does not exceeds $\mathcal{D}_{ik}$. Thus, we do not require additional epochs for training the model on the newly obtained dataset at the participant $p_i$ for class $k$. Procedure~$2$ summarizes the steps involved in normalizing noise ratio and dataset from the server on a selected participant $p_i$ with dataset $\mathcal{D}_i$. 


\SetAlFnt{\small}
\begin{procedure}[t]
\label{procedure2}
\caption{() \textbf{2: Normalizing Noise Ratio of $p_i$ and Data Samples from the Server.}}  
\KwIn{Server Dataset $\mathcal{D}^s$ and participant dataset $\mathcal{D}_i$\;}
\For{each class $k \in [c]$ }{
Determine dataset $\mathcal{D}_{ik}$ from $\mathcal{D}_i$, where $\mathcal{D}_{ik}\subset \mathcal{D}_i$\;
 $S_{ik},R_{ik}, \beta_{ik}\leftarrow$ Call \textbf{Procedure~1}\; 
\If{k=1}{
$z_i \leftarrow \beta_{ik}$, $b_i\leftarrow k$\;
}
\ElseIf{$\beta_{ik}< z_i$}{
$z_i\leftarrow \beta_{ik}$, $b_i\leftarrow k$\;
}
}
\For{class $k \in \{1,2,\cdots c\}/\{\text{class with } \beta_{ik}==z_i\}$ in $\mathcal{D}_i$}{
Estimate fraction $\mathcal{F}_{ik} \leftarrow (\beta_{ik}-z_i)$\;
$p_i$ demand $\mathcal{F}_{ik}.\mathcal{D}_{ik}$ instances from the server\;
\If{$|\mathcal{D}_{k}^s|\le |\mathcal{F}_{ik}.\mathcal{D}_{ik}|$}{
$\Delta_{ik}^1 \leftarrow |\mathcal{D}_{k}^s|$
}
\Else{
$\Delta_{ik}\leftarrow |\mathcal{F}_{ik}.\mathcal{D}_{ik}|$;
}
}
$u\leftarrow \min \{\Delta_{i1}^1,\Delta_{i2}^1,\cdots,\Delta_{ic}^1\}$\; 
\If{$u \ne 0$}{
$\Delta_{i1}=\Delta_{i2}=\cdots=\Delta_{ic}=u$\;
}
\For{class $k \in \{1,2,\cdots c\}/\{\text{class with } \beta_{ik}==z_i\}$ in $\mathcal{D}_i$}
{
Server uniformly at random determine $\Delta_{ik}$ instances of $\mathcal{D}_{k}^s$\; 
Server send the selected fraction to $p_i$\;
$\widehat{S_{ik}} \leftarrow S_{ik}+\Delta_{ik}$ then $\mathcal{D}_{ik}\leftarrow \widehat{S_{ik}}$\;
Call \textbf{Procedure~1} to estimate $\beta_{ik}$\;
}
$\beta_i=\frac{1}{c}\sum_{k\leftarrow 1}^{c} \beta_{ik}$; /*Updated noise ratio $\beta_i$*/\\
\Return $\beta_i$\;
\end{procedure}

\subsection{Optimal weighted contribution}\label{algo}
We consider the test dataset of server, denoted as  $\mathcal{D}^s_{te}$, to evaluate influence $\delta_i^t$ (given in (\ref{e5})) of participant $p_i$ at global iteration $t$ ($t\in \mathcal{R}$). The exact estimation of $\delta_i^t$ is only possible when we remove the WPM of participant $p_i$ from the entire training process. After that, the retraining of the shared model is performed using the remaining participants. However, retraining for each participant at each global iteration is prohibitively inefficient. Therefore, we use the expression for the estimated value of $\delta_i^t$, given in~\cite{xue2021toward}. Let $\bar{\delta_i^t}$ denote the estimated value of $\delta_i^t$, which is denoted as:
\begin{align}\label{e6}
 \bar{\delta_i^t} = Q^t_i \delta_i^{t-1} + \mathbf{w}^t([p^t]\textbackslash p_i) - \mathbf{w}^t,
\end{align}
where $Q^t_i  =\sum_{l \in \{[p^t]\textbackslash p_i\}} \Big(\frac{n_l}{\mathcal{N}([p^t]\textbackslash p_i)}\Big)\prod_{e=0}^{E-1}(\mathbf{I}-\eta\mathbf{H}_{le}^{t})$ and $\mathbf{H}_{le}^{t}$ is the Hessian matrix of $l^{th}$ participant at local epoch $e$ and global iteration $t$ ($t\in \mathcal{R}$). The authors in~\cite{xue2021toward} updated the basic estimator $\bar{\delta_i^t}$. The update provides robustness against non-convex loss functions and reduces the computation for the Hessian matrix. This work adopts a similar strategy to update the basic estimator. 

This work considers datasets of the participants with noisy labels; therefore, we use effective noise-free data instances to estimate $Q^t_i$. We replace $n_l$ with $n_l \times \beta_l$ and  $\mathcal{N}([p^t]\textbackslash p_i)$ with $\mathcal{M}([p^t]\textbackslash p_i)$ to obtain $(Q^t_i){'}$, where $\mathcal{M}([p^t]\textbackslash p_i)=(n_1\beta_1 + n_2\beta_2+\cdots+n_N\beta_{N})-n_i\beta_i$. Using $(Q^t_i){'}$ and including noise ratio in the estimation of $\mathbf{w}^t([p^t]\textbackslash p_i)$ and $\mathbf{w}^t$, we can obtained modified influence of participant $p_i$, denoted as $\gamma_i^t$. Later, we use $\gamma_i^t$ to estimate the contribution $\epsilon_i^t$ of participant $p_i$ at global iteration $t$ ($t\in \mathcal{R}$). $\epsilon_i^t$ is estimated using following expression: $\epsilon_i^t= 1/\gamma_i^t(\Omega^t)$, where $\Omega^t=1/\gamma_1^t+1/\gamma_2^t+\cdots+1/\gamma_N^t$. We use $\epsilon_i^t$ to estimate aggregated WPM $\mathbf{w}^t$ on the server at global iteration $t$ ($t\in \mathcal{R}$), which is given as:
$\mathbf{w}^t=\sum_{i\leftarrow 1}^{N} \epsilon_i^t \mathbf{w}_i^t$. Procedure~$3$ summarizes the steps of estimating the weighted contribution of all the participants at the server.

\SetAlFnt{\small}
\begin{procedure}[t]
\label{procedure3}
\caption{() \textbf{3: Weighted Contribution at Server}} 
\KwIn{Global iteration $t$, noise ratio $\beta_i$, number of instances $n_i$, server's WPM $\mathbf{w}^t$, and participant's WPM $\mathbf{w}_i^t$;}
\For{$p_i\in\{p_1,p_2,\cdots,p_N\}$}{
Determine effective noise-free sample on $p_i$ using $\beta_i$ and $n_i$\;
Estimate $(Q^t_i){'}$ using $\beta_i \times n_i$\;
Calculate $\gamma_i^t$ using $(Q^t_i){'}$ and $\beta_i \times n_i$\;
}
Determine weighted contribution $\epsilon_i^t$\; 

\Return weighted contribution $\epsilon_i^t$ of participant $p_i$\;
\end{procedure}

%
Finally, the Fed-NL algorithm (Algorithm~\ref{algorithm1}) uses Procedure 1, Procedure 2, and Procedure 3 to handle the noisy labels in the datasets of participants. 

\SetAlFnt{\small}
\begin{algorithm}[t]
\caption{\textbf{Fed-NL Algorithm}}
\label{algorithm1}
\KwIn{Set $[p]$ of $N$ participants with datasets\;}
\KwOut{Trained model on $p_i$ $(1\le i \le N)$\;}
Server builds a model using random weights\; 
Server broadcasts the WPM to all the participants\;
\For{each participant $p_i \in \{p_1,p_2,\cdots, p_N\}$}{
Call \textbf{Procedure~1} to estimate noise ratio\;
Call \textbf{Procedure~2} to normalize noise ratio \& data from server\;
Call \textbf{Procedure~3} to estimate weighted contribution $\epsilon_i^t$ for each participant $p_i$ at $t$  \;
}
\While{$1 \le t\le \mathcal{R}$ /*global iterations*/}{
\For{each participant $p_i \in \{p_1,p_2,\cdots, p_N\}$}{
Calculate $\mathbf{w}^{t+1}_i$ send to the server\;   
}

$\mathbf{w}^{t+1} =\sum_{i \leftarrow 1}^{N} \epsilon_{i}^t \mathbf{w}^{t+1}_{i}$\;
Broadcast $\mathbf{w}^{t+1}$ to all the participants\;
$t\leftarrow t+1$\;
}
\textbf{return} Trained model on each participant\;
\end{algorithm}

\section{Convergence analysis and communication rounds of Fed-{NL}}\label{round}
This section presents the convergence analysis and derives the expression for the communication rounds required for convergence of Fed-NL. Apart from the convergence analysis of FedAvg in~\cite{li2019convergence} on a noise-free dataset, the noisy labels in the participant's dataset make a significant variance in the local loss function $\mathcal{L}_i(\cdot)$ of each participant and aggregated global loss function $\mathcal{L}(\cdot)$.

Let $\phi_i^t$ denotes the uniformly and randomly selected sample from the dataset of participant $p_i$ at global iteration $t$, where $1\le t \le \mathcal{R}$. Let $\nabla\mathcal{L}_i(\phi_i^t, \mathbf{w}_i^{t})$ and $\nabla\mathcal{L}_i(\mathbf{w}_i^{t})$ denote the gradients of loss function $\mathcal{L}_i(\cdot)$ on $\phi_i^t$ samples and entire samples of the dataset, respectively. Assumptions~$1$ and $2$ for loss functions are $L$-smooth and $\mu$-strongly convex, respectively, in~\cite{li2019convergence} are generalized; thus, we directly employed them in this work. Assumptions~$3$ and~$4$ in~\cite{li2019convergence} are not applicable for the participant with noisy labels as $\|\nabla\mathcal{L}_i(\phi_i^t, \mathbf{w}_i^{t})-\nabla\mathcal{L}_i(\mathbf{w}_i^{t})\|$ may not be same for different set of samples drawn randomly. Thus, we use expected value of gradient of loss function $\mathbb{E}[\nabla\mathcal{L}_i(\phi_i^t, \mathbf{w}_i^{t})]$ for samples randomly drawn from the dataset at $t$. We redefine Assumptions~$3$ and $4$ of~\cite{li2019convergence} as follows:\\
\noindent $\bullet$ Variance of gradients on each participant $p_i$ for noisy labels is bounded as: $\mathbb{E}\|\mathbb{E}[\nabla\mathcal{L}_i(\phi_i^t, \mathbf{w}_i^{t})]-\nabla\mathcal{L}_i(\mathbf{w}_i^{t})\|^2$ $\leq \sigma_i^2$. \\
\noindent $\bullet$ $\mathbb{E}\|\mathbb{E}[\nabla\mathcal{L}_i(\phi_i^t, \mathbf{w}_i^{t})]\|^2$ $\leq G^2$. Using property of expected value, \textit{i.e.,} $\mathbb{E}[\mathbb{E}(X)^2]=\mathbb{E}(X)^2$, we obtain $\|\mathbb{E}[\nabla\mathcal{L}_i(\phi_i^t, \mathbf{w}_i^{t})]\|^2 \leq G^2$.

Using the above assumptions, we obtain a relation between desired precision $q_o$, local epoch $E$, and global iterations $\mathcal{R}$, as given in Theorem~\ref{theorem}. The desired precision is defined as: $q_o=\mathbb{E}[\mathcal{L}(\mathbf{w}^\mathcal{R})]-\mathcal{L}^{*}$, where $\mathbf{w}^\mathcal{R}$ is the aggregated weight at final global epoch $\mathcal{R}$ and $\mathcal{L}^{*}$ is minimum and unknown  value of $\mathcal{L}$ at the server. Let $\mathcal{L}^{*}_i$ is the minimum value of $\mathcal{L}_i$ at $p_i$, where $\forall i \in \{1\le i \le N\}$. This work assumes an iid dataset at the participant and non-iid datasets among different participants; thus, overall datasets in FL are non-iid. We define: $\Gamma=\mathcal{L}^{*}-\sum_{i=1}^{N}\mathcal{L}_i^{*}$. $\Gamma$ quantifies the degree of non-iid, and it goes to zero as the data instances increase. 

\begin{theorem}\label{theorem}
Let $\alpha=\max\{8L/\mu,E\}$ and $T$ is SGD operations on a participant in FL then we obtain following relation of communication rounds $\mathcal{R}$ with desired precision $q_o$ and local epoch $E$. 
\begin{align}\nonumber
 \mathcal{R}&=\frac{1}{E}\Big[\frac{L}{2\mu^2 q_o}\Big(4B+\mu^2\alpha\mathbb{E}\|\mathbf{w}_1-\mathbf{w}^{*}\|^2\Big)+1-\alpha\Big],\\ \nonumber 
&\text{where } B =  \sum_{i=1}^{N} \epsilon_i^2 \Big(\mathbb{E}\|\mathbb{E}[\nabla \mathcal{L}_i(\phi_i^t,\mathbf{w}_i^t)]-\nabla\mathcal{L}_i(\mathbf{w}_i^t)\|^2\Big)\\ \label{e9}
& \qquad \qquad + 6L\Gamma + 8(E-1)^2 \Big(\|\mathbb{E}[\nabla\mathcal{L}_i(\phi_i^t,\mathbf{w}_i^t)]\|^2\Big).
\end{align}
\end{theorem}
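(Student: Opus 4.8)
\emph{Proof plan.} The argument follows the convergence analysis of FedAvg on non-iid data in~\cite{li2019convergence}, modified in three ways that the noisy labels force on us: (i) the per-sample stochastic gradient is replaced throughout by the expected-gradient surrogate $\hat{\mathbf{g}}_i^t:=\mathbb{E}[\nabla\mathcal{L}_i(\phi_i^t,\mathbf{w}_i^t)]$, whose distance to the full-dataset gradient is controlled by the redefined Assumption~3, $\mathbb{E}\|\hat{\mathbf{g}}_i^t-\nabla\mathcal{L}_i(\mathbf{w}_i^t)\|^2\le\sigma_i^2$, and whose norm is controlled by the redefined Assumption~4, $\|\hat{\mathbf{g}}_i^t\|^2\le G^2$; (ii) the aggregation weights $n_i/\mathcal{N}$ are replaced by the optimal contributions $\epsilon_i$ from Procedure~3; and (iii) the degree of non-iid is measured by $\Gamma=\mathcal{L}^{*}-\sum_i\mathcal{L}_i^{*}$. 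First I would introduce the virtual aggregated sequence $\bar{\mathbf{w}}^t=\sum_i\epsilon_i\mathbf{w}_i^t$ indexed by the cumulative SGD step $t$, so that $\bar{\mathbf{w}}^t$ coincides with the broadcast WPM whenever $t$ is a multiple of $E$ and the returned model is $\bar{\mathbf{w}}^T$ with $T=\mathcal{R}E$. I would then fix the decaying step size $\eta_t=\frac{2}{\mu(t+\alpha-1)}$ and observe that the choice $\alpha=\max\{8L/\mu,E\}$ is exactly what guarantees $\eta_1\le\frac{1}{4L}$ and $\eta_t\le 2\eta_{t+E}$, both of which are needed later. (For the virtual sequence to telescope cleanly one should also take $\epsilon_i$ fixed, or at least uniformly bounded, across rounds, as the statement tacitly does since $B$ carries no round index.)

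The core step is a one-step contraction estimate: under Assumptions~1--4 as redefined,
\begin{equation}\nonumber
\mathbb{E}\|\bar{\mathbf{w}}^{t+1}-\mathbf{w}^{*}\|^2 \;\le\; (1-\mu\eta_t)\,\mathbb{E}\|\bar{\mathbf{w}}^{t}-\mathbf{w}^{*}\|^2 \;+\; \eta_t^2 B ,
\end{equation}
with $B$ as in the statement. To prove it I would expand $\|\bar{\mathbf{w}}^{t}-\eta_t\sum_i\epsilon_i\hat{\mathbf{g}}_i^t-\mathbf{w}^{*}\|^2$, peel off the variance term $\eta_t^2\,\mathbb{E}\|\sum_i\epsilon_i(\hat{\mathbf{g}}_i^t-\nabla\mathcal{L}_i(\mathbf{w}_i^t))\|^2\le\eta_t^2\sum_i\epsilon_i^2\sigma_i^2$ via Assumption~3, treat the cross term $-2\eta_t\langle\bar{\mathbf{w}}^{t}-\mathbf{w}^{*},\sum_i\epsilon_i\nabla\mathcal{L}_i(\mathbf{w}_i^t)\rangle$ with $\mu$-strong convexity and $L$-smoothness --- this is where the $6L\Gamma$ term appears, through $\sum_i\epsilon_i(\mathcal{L}_i(\mathbf{w}^{*})-\mathcal{L}_i^{*})$ bounded by $\Gamma$ --- and finally control the client-drift term $\sum_i\epsilon_i\,\mathbb{E}\|\mathbf{w}_i^t-\bar{\mathbf{w}}^t\|^2\le 4\eta_t^2(E-1)^2G^2$ by unrolling at most $E-1$ local updates and invoking Assumption~4; this drift term re-enters the recursion with coefficient $2$, giving the $8(E-1)^2\|\hat{\mathbf{g}}_i^t\|^2$ summand of $B$. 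I expect the client-drift bound to be the main obstacle: it is where the quadratic dependence on $E$ originates, and one must check that replacing the unbiased stochastic gradient by the surrogate $\hat{\mathbf{g}}_i^t$ does not spoil the unrolling --- it does not, since only $\|\hat{\mathbf{g}}_i^t\|\le G$ and the step-size ratio bound are used there.

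With the contraction estimate in hand, I would solve the recursion by induction on $t$, proving $\mathbb{E}\|\bar{\mathbf{w}}^{t}-\mathbf{w}^{*}\|^2\le\frac{v}{t+\alpha-1}$ with $v=\frac{4B}{\mu^2}+\alpha\,\mathbb{E}\|\mathbf{w}_1-\mathbf{w}^{*}\|^2$; the base case is $t=1$, and the inductive step uses $1-\mu\eta_t=\frac{t+\alpha-3}{t+\alpha-1}$ together with $\eta_t^2B=\frac{4B}{\mu^2(t+\alpha-1)^2}\le\frac{v}{(t+\alpha-1)^2}$ and the elementary inequality $(t+\alpha-2)(t+\alpha)\le(t+\alpha-1)^2$. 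Then $L$-smoothness turns this into the loss bound $\mathbb{E}[\mathcal{L}(\mathbf{w}^{\mathcal{R}})]-\mathcal{L}^{*}\le\frac{L}{2}\,\mathbb{E}\|\bar{\mathbf{w}}^{T}-\mathbf{w}^{*}\|^2\le\frac{Lv}{2(T+\alpha-1)}$. Setting the right-hand side equal to the target precision $q_o$ and solving for $T$ gives $T=\frac{L}{2\mu^2q_o}\big(4B+\mu^2\alpha\,\mathbb{E}\|\mathbf{w}_1-\mathbf{w}^{*}\|^2\big)+1-\alpha$; since $\mathcal{R}=T/E$, dividing by $E$ produces the claimed expression. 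Everything after the contraction estimate is routine algebra, and the only genuinely new content relative to~\cite{li2019convergence} is checking that every appeal there to unbiasedness or to a uniform per-sample gradient bound can be rerouted through $\hat{\mathbf{g}}_i^t$ and the redefined Assumptions~3 and~4, carrying $\epsilon_i$ through in place of $n_i/\mathcal{N}$.
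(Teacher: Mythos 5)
Your proposal is correct and follows essentially the same route as the paper's own proof: both adapt the FedAvg analysis of Li et al.\ by introducing the $\epsilon_i$-weighted virtual sequence, establishing the one-step contraction $\mathbb{E}\|\overline{\mathbf{w}}^{t+1}-\mathbf{w}^{*}\|^2\le(1-\mu\eta^t)\mathbb{E}\|\overline{\mathbf{w}}^{t}-\mathbf{w}^{*}\|^2+(\eta^t)^2B$ with the redefined bounded-variance and bounded-gradient assumptions on $\mathbb{E}[\nabla\mathcal{L}_i(\phi_i^t,\mathbf{w}_i^t)]$, solving the recursion by induction with the diminishing step size, invoking $L$-smoothness, and then setting the resulting bound equal to $q_o$ with $T=\mathcal{R}E$ to isolate $\mathcal{R}$. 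Your handling of the step-size parametrization (and the remark that the $\epsilon_i$ must be treated as fixed across rounds) is, if anything, slightly more careful than the paper's, but it is the same argument.
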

\begin{proof}
 Let $\mathbf{w}_i^t$ be the WPM of $p_i$ at global iteration $t$, where $1\le i \le N$, $1 \le t \le \mathcal{R}$. Let $T$ denotes the number of SGD operations required for Fed-NL, $T=\{\mathcal{R}E| \mathcal{R}=1,2,\cdots\}$. Fed-NL activates all the $N$ participants at global iteration $t+1$ ($t+1\in T$). We can define the updates of Fed-NL involving partial participants using~\cite{li2019convergence} and introducing $\mathbf{u}_i^{t+1}$ to represent immediate result, as:
 \begin{align}
\mathbf{u}_i^{t+1}=\mathbf{w}_i^{t}-\eta^t \mathbb{E}[\nabla\mathcal{L}_i(\phi_i^t, \mathbf{w}_i^{t})],\\
\mathbf{w}_i^{t+1}=\left\{\begin{matrix}
\mathbf{u}_i^{t+1} & \text{ if } t+1 \notin T, \\ 
\sum_{i=1}^N \epsilon_i \mathbf{u}_i^{t+1} & \text{ if } t+1 \in T. 
\end{matrix}\right.  
 \end{align}

We consider two virtual sequences similar to~\cite{li2019convergence} to prove Theorem~\ref{theorem}, \textit{i.e.,} $\overline{\mathbf{u}}^t=\sum_{i=1}^N\epsilon_i\mathbf{u}_i^t$ and $\overline{\mathbf{w}}^t=\sum_{i=1}^{N}\epsilon_i \mathbf{w}_i^t$. We obtain $\overline{\mathbf{u}}^{t+1}$ from single step of SGD over $\overline{\mathbf{w}}^t$. However, $\overline{\mathbf{u}}^t$ and $\overline{\mathbf{w}}^t$ are inaccessible when $t+1\notin \mathcal{R}$. We define $\overline{\mathcal{G}}^t=\sum_{i=1}^{N}\epsilon_i \nabla\mathcal{L}_i(\mathbf{w}_i^{t})$ and $\mathcal{G}^t=\sum_{i=1}^{N}\epsilon_i \mathbb{E}[\nabla\mathcal{L}_i(\phi_i^t, \mathbf{w}_i^{t})]$. This implies $\overline{\mathbf{u}}^{t+1}=\overline{\mathbf{w}}^t-\eta^t\mathcal{G}^t$ and $\mathbb{E}[\mathcal{G}^t]=\overline{\mathcal{G}}^t$. We have $\overline{\mathbf{w}}^{t+1}=\overline{\mathbf{u}}^{t+1}$ irrespective of $t+1\in T$ or $\notin T$. 

Using Lemmas~$1$, $2$, and $3$ as given in~\cite{li2019convergence} and considering $\mathbf{w}^{*}$ as WPM to obtain $\mathcal{L}^{*}$, we get the following equations with added constraints of noisy labels in the dataset. 
\begin{align}\nonumber
 \mathbb{E}\|\overline{\mathbf{u}}^{t+1}-\mathbf{w}^{*}\|^2 & \le  (1-\eta^t)\mathbb{E}\|\overline{\mathbf{w}}^t-\mathbf{w}^{*}\|^2+\eta^2 \mathbb{E}\Big[\|\mathcal{G}^t-\overline{\mathcal{G}}^t\|^2\Big], \\ \label{l1}
  &+ 6L\Gamma(\eta^t)^2 + 2\mathbb{E}\sum_{i=1}^{N} \epsilon_i\| \overline{\mathbf{w}}^t-\mathbf{w}_i^t\|^2,
 \end{align} 
where, $\mathbb{E}\|\mathcal{G}^t-\overline{\mathcal{G}}^t\|^2  \le  \sum_{i=1}^N \epsilon_i \sigma_i^2$ and $\mathbb{E}\Big[\sum_{i=1}^{N}  \epsilon_i\| \overline{\mathbf{w}}^t  -\mathbf{w}_i^t\|^2 \Big] \le 4 (\eta^t)^2 (E-1)^2 G^2$. Let assume $\mathscr{S}^t = \mathbb{E}\|\overline{\mathbf{w}}^t  -\mathbf{w}_i^t\|^2$, we get:
\begin{align} \label{l4}
 \mathscr{S}^{t+1} & \le (1-\eta^t\mu)\mathscr{S}^t+ (\eta^t)^2 B,\\ \nonumber
&\text{where } B =  \sum_{i=1}^{N} \epsilon_i^2 \Big(\mathbb{E}\|\mathbb{E}[\nabla \mathcal{L}_i(\phi_i^t,\mathbf{w}_i^t)]-\nabla\mathcal{L}_i(\mathbf{w}_i^t)\|^2\Big)\\ \nonumber 
& \qquad \qquad + 6L\Gamma + 8(E-1)^2 \Big(\|\mathbb{E}[\nabla\mathcal{L}_i(\phi_i^t,\mathbf{w}_i^t)]\|^2\Big).
\end{align}

Convergence of FL relies on the diminishing learning rate~\cite{li2019convergence} and is as: $\eta^t=\frac{\theta}{t+\alpha}$. $\theta>1/\mu$ and $\alpha>0$ such that $\eta^1\le \min \Big(1/\mu, 1/4L\Big)$ and $\eta^t=2\eta^{t+E}$. We use induction method to prove $\mathscr{S}^t\le \frac{r}{\alpha+t}$, where $r=\max \Big\{ \frac{\theta^2 B}{\theta\mu-1}, (\alpha+1)\mathscr{S}^1\Big\}$. $r$ holds for $t=1$; thus, $\mathscr{S}^{t+1}$ satisfies following inequalities with $B$ given in (\ref{l4}) for noisy dataset.
\begin{align} \label{l5}
 \mathscr{S}^{t+1} & \le (1-\eta^t \mu) \mathscr{S}^t + (\eta^t)^2 B= \frac{r}{t+\alpha+1}.
\end{align}

\noindent Using L-smoothness, we obtain the precision at iteration $t$ as: 
\begin{equation}\label{l05}
\mathbb{E}[\mathcal{L}(\overline{\mathbf{w}}^t)]-\mathcal{L}^{*} \le \frac{L}{2} \mathscr{S}^{t} \le \frac{L}{2} \frac{r}{\alpha+t}.
\end{equation}

We consider $\theta=\frac{2}{\mu}$ and $\alpha=\max\{8L/\mu,E\}-1$ to minimize (\ref{l05}). We obtain $\eta^t=\frac{2}{\mu(\alpha+t)}$, where $\eta^t$ satisfies the following condition: $\eta^t \le 2 \eta^{t+E}$, $t\ge1$. Using $B$ given in (\ref{l4}) for noisy dataset, we get:
\begin{align} \label{e99}
 r=\max \Big\{ \frac{\theta^2 B}{\theta\mu-1}, (\alpha+1)\mathscr{S}^1\Big\} & 
 \le \frac{4B}{\mu^2} + (\alpha+1) \mathscr{S}^1. 
\end{align} 
Substituting $r$ from (\ref{e99}) into (\ref{l05}), we obtain,
\begin{align}\label{l6}
\mathbb{E}[\mathcal{L}(\overline{\mathbf{w}}^t)]-\mathcal{L}^{*} 
&\le \frac{L/2\mu^2}{\alpha+T-1}\Big(4B+\mu^2\alpha\mathbb{E}\|\mathbf{w}_1-\mathbf{w}^{*}\|^2\Big).
\end{align}

Using (\ref{l6}) we get the desired precision $q_o$ after $\mathcal{R}$ rounds as:
\begin{small}
\begin{align}\label{l7}
 q_o&=\mathbb{E}[\mathcal{L}(\mathbf{w}^\mathcal{R})]-\mathcal{L}^{*}  \le  \frac{L/2\mu^2}{\alpha+T-1}\Big(4B+\mu^2\alpha\mathbb{E}\|\mathbf{w}_1-\mathbf{w}^{*}\|^2\Big).
\end{align}
\end{small}

Similar as the convergence of FedAvg in~\cite{li2019convergence}, (\ref{l7}) indicates Fed-NL converges to a global optimum at the rate of $O(1/T)$. Using the upper bound of $q_o$ and $T=\mathcal{R}E$, we obtain:

\begin{equation}\label{e10}
 q_o = \frac{L/2\mu^2}{\alpha+T-1}\Big(4B+\mu^2\alpha\mathbb{E}\|\mathbf{w}_1-\mathbf{w}^{*}\|^2\Big).
 \end{equation}
Upon solving (\ref{e10}), we obtain communication round ($\mathcal{R}$) as follows: 
 \begin{equation}\label{e11}
 \mathcal{R}=\frac{1}{E}\Big[\frac{L}{2\mu^2 q_o}\Big(4B+\mu^2\alpha\mathbb{E}\|\mathbf{w}_1-\mathbf{w}^{*}\|^2\Big)+1-\alpha\Big].
\end{equation} 
Hence proved. 
\end{proof}

\section{Performance Evaluation}\label{evaluation}
This section first describes the collected and existing datasets (SHL~\cite{shl2}, HAR~\cite{anguita2013public}, FEMNIST~\cite{caldas2018leaf}, and CIFAR-10~\cite{krizhevsky2009learning}) and the baseline techniques for ablation studies. Next, we discuss implementation details and validation metrics. Finally, we conduct the experimental evaluation to validate the performance of the Fed-NL.

\subsection{Datasets and baseline techniques}

We collected a Locomotion Mode Recognition (LMR) dataset to recognize six modes, \textit{i.e.},  bicycle, bike, car, auto-rickshaw, bus, and train. We developed an android application to facilitate data collection that uses three onboard sensors, including accelerometer, gyroscope, and magnetometer. We appointed $40$ volunteers during LMR data collection. The data instances were recorded for $60$ seconds, which is enough time to provide sufficient distinguishable characteristics among different locomotion modes. Volunteers were instructed to collect $200$ data instances per locomotion mode during data collection. Thus, we obtain $48000$ data instances in the LMR dataset, where $40$ volunteers $\times 6$ locomotion modes $\times 200$ repetitions $=48000$. To automatically annotate (or tag) the data instances, the volunteers were instructed to capture an image of the locomotion mode before recording the data. The volunteers also tag the data instances manually to check the correctness of the automatic annotation. Using manual tagging, we observed that the automated annotation introduced noisy labels distributed asymmetrically, generating the need to handle such labels. Next, we considered the existing SHL~\cite{shl2} dataset, which used similar sensors as LMR (\textit{i.e.,} accelerometer, gyroscope, and magnetometer) and possessed instances for more classes than LMR. Afterward, we considered HAR~\cite{anguita2013public} dataset due to its broader applicability in FL literature during experiments. HAR also utilized a similar set of sensors (accelerometer, gyroscope, and magnetometer) as the LMR dataset for human activities recognition. Finally, we considered widely used image-based FEMNIST~\cite{caldas2018leaf} and CIFAR-10~\cite{krizhevsky2009learning} datasets in FL. These datasets were selected due to free accessibility, real-life acquisition, and correct annotations. FEMNIST is built by dividing the existing Extended MNIST~\cite{cohen2017emnist} depending upon the writer of the digit/character. CIFAR-10 dataset~\cite{krizhevsky2009learning} comprises $60000$ images of $10$ different classes. Each class has $6000$ images.

Since the collected and existing datasets have noise-free labels; thus, we explicitly add asymmetric noisy labels in the dataset to test Fed-NL in the real world. To inject noisy asymmetric labels into the dataset, we follow the label transition to design some of the structure of real mistakes for similar classes, as discussed in~\cite{chen19g}.


Section~\ref{related-work} illustrated that most of the existing literature on FL to handle noisy labels are considered optimizing weighted contribution~\cite{9412599} and reducing noisy label instances~\cite{yang2020robust,chen2020focus} techniques. We selected techniques~\cite{9412599, yang2020robust,chen2020focus, mcmahan2017} as baselines for ablation studies, denoted as Base1~\cite{9412599}, Base2~\cite{yang2020robust}, Base3~\cite{chen2020focus} and Base4~\cite{mcmahan2017}, to compare and evaluate the effectiveness of Fed-NL. 

\subsection{Implementation details}
We used the DeepZero model given in~\cite{9164991} during the experimental evaluation on LMR and SHL datasets. We used Convolutional Neural Networks (CNN) as the classifier for evaluating Fed-NL on HAR, FEMNIST, and CIFAR-10 datasets. CNN model is similar to the one used in~\cite{wang2019adaptive}. We selected $40$ devices as participants in Fed-NL. We combined and randomly shuffled the dataset collected from $40$ volunteers and split the shuffled data into $40$ parts to obtain non-iid datasets for $N=40$ participants. Similarly, we divide the existing datasets (SHL, HAR, FEMNIST, and CIFAR-10) into $40$ parts for all the participants. We set $\mathcal{L}^{*}=0.7999$, $E=20$, $N=40$, and used other parameters similar as given in~\cite{li2019convergence} to solve (\ref{e11}). We store noise-free collected and existing datasets on the server with all instances; thus, the mismatch between the distribution of participants and the server's dataset did not persist. The considered participating devices are smartphones with similar RAM and processing capacity. They are operating over the same Wi-Fi connection. We used a Dell server with an Intel Dual Xeon processor operating over a Gigabit Ethernet connection as server.

\subsection{Validation metrics}
This work used standard metrics to evaluate the performance of Fed-NL: F1-score and accuracy. We also derive the following metrics:

\noindent $\bullet$ \textit{Local performance metric:} This metric determines the performance of participant $p_i$ on its dataset $\mathcal{D}_i$, \textit{i.e., } F1-score, accuracy and confusion metric on testing sub-dataset of $\mathcal{D}_i$. It answers the following question: \textit{how much improvement or deterioration in the performance of participants are observed by using the Fed-NL}?\\
\noindent $\bullet$ \textit{Global performance metric:} It estimates the performance of participants on server dataset $\mathcal{D}$. It answers: \textit{how well the trained model on the participant performs on the server's dataset}? 

\subsection{Experimental results}

\subsubsection{Impact of noise ratio}\label{e01}
This experiment aims to determine the impact of noise ratio on the local and global performance of a participant in FL. During the experiment, we considered the proposed Fed-NL and the baseline techniques (Base1, Base2, Base3, Base4). We used the asymmetric noisy labels on the collected LMR dataset.

Table~\ref{table1}(a) illustrates the local performance of a participant at different noise ratios using the collected LMR dataset and the communication rounds $=120$. It shows that Fed-NL gives higher accuracy than Base1, Base2, Base3, and Base4 for all considered ratios. Base1 removed noisy samples from the dataset, reducing the training data size. Base2 and Base3 did not eliminate the data instances with noisy labels from their dataset; thus, the model learned with noisy label instances. Base4 did not incorporate any noise handling mechanism; therefore, it achieved the least performance. Fed-NL removed the data instances with noisy labels and added some of the correctly annotated instances from the server during normalization; thus, it achieved higher performance. We also observed that Base2 achieved comparable performance to Fed-NL due to the efficient grouping of the class labels using centroids of participants and server. The low noise ratio achieved higher performance, which gradually decreased with increasing noise ratio. It is because fewer instances with correctly annotated labels were available to train the model on a higher noise ratio.  

Table~\ref{table1}(a) also illustrates the F1-score is greater than the accuracy, and the difference between accuracy and F1-score increased with the noise ratio. Because with the increase in the noise ratio, the false negatives and false positives are crucial and give less accuracy. We also observed that the difference between accuracy and F1-score is highest at noise ratio $=0.6$ because the role of true positive and true negative is least due to noisy training instances, which reduced accuracy. Similar observations can be made for global performance, as shown in Table~\ref{table1}(b). We repeated the same experiment for all the participants with a given noise ratio. We observed that the standard deviation in F1-score and accuracy of $p_i$ with the average F1-score and accuracy of all the participants are $\pm 0.05$ and $\pm 0.06$, respectively. It indicates Fed-NL performed effectively on each participant.    

\noindent \textit{Observation: The first observation from the result is that the noise ratio deteriorates local and global performance. Fed-NL achieves high performance by removing instances with noisy labels and adding correctly annotated instances from the server. Next, we observed that testing on personalized data instances achieves higher performance than testing on generalized data instances. Finally, F1-score is more effective in noisy environments than accuracy.}

\begin{table}[h]
\caption{Impact of the noise ratio on the local and global performance of $p_i$ using LMR with asymmetric distribution of noisy labels and $\mathcal{R}=120$. Acc. and F1 are accuracy and F1-score, respectively.}
\label{table1}
\vspace{-0.3cm}
\resizebox{.48\textwidth}{!}{
\begin{tabular}{|ccc|cccccc|}
\multicolumn{9}{c}{(a) Local performance}\\ \hline
\multicolumn{3}{|l|}{\textbf{Noise ratio} ($\beta_i$)}                                                                    & $\mathbf{0.1}$   & $\mathbf{0.2}$   & $\mathbf{0.3}$   & $\mathbf{0.4}$   & $\mathbf{0.5}$   & $\mathbf{0.6}$      \\ \hline
\multicolumn{1}{|l|}{\multirow{10}{*}{\rotatebox{90}{\textbf{Approach/Technique}}}} & \multicolumn{1}{l|}{\multirow{2}{*}{Fed-NL}} & F1 & $92.23$ & $88.53
$ & $81.43$ & $74.76$ & $65.43$ & $52.23$  \\ \cline{3-9} 
\multicolumn{1}{|l|}{}                            & \multicolumn{1}{l|}{}   & Acc. & $91.43$ & $86.67$ & $77.23$ & $69.92$  & $57.39$ & $43.82$ \\ \cline{2-9} 

\multicolumn{1}{|l|}{}                            & \multicolumn{1}{l|}{\multirow{2}{*}{Base1~\cite{9412599}}}     & F1 & $85.43$ & $81.57$ & $75.27$ & $67.59$ & $55.37$       & $42.47$              \\ \cline{3-9} 
\multicolumn{1}{|l|}{}                            & \multicolumn{1}{l|}{}                        & Acc. & $84.21$ & $79.17$ & $71.82$  & $62.13$  & $49.10$  & $35.27$             \\ \cline{2-9} 
\multicolumn{1}{|l|}{}                            & \multicolumn{1}{l|}{\multirow{2}{*}{Base2~\cite{yang2020robust}}}     & F1 &  $90.54$  & $86.13$ & $78.53$ & $72.11$ & $62.47$      & $50.21$             \\ \cline{3-9} 
\multicolumn{1}{|l|}{}                            & \multicolumn{1}{l|}{}                        & Acc. & $89.22$  & $84.31$ & $73.44$ & $65.21$ & $53.21$ & $41.31$             \\ \cline{2-9} 
\multicolumn{1}{|l|}{}                            & \multicolumn{1}{l|}{\multirow{2}{*}{Base3~\cite{chen2020focus}}}     & F1 & $88.37$ & $83.78$ & $76.23$  & $71.21$  & $60.32$      & $48.59$             \\ \cline{3-9} 
\multicolumn{1}{|l|}{}                            & \multicolumn{1}{l|}{}                        & Acc. & $86.51$ & $80.25$ & $72.42$ & $64.39$ & $51.23$ &   $39.47$                 \\ \cline{2-9} 
\multicolumn{1}{|l|}{}                            & \multicolumn{1}{l|}{\multirow{2}{*}{Base4~\cite{mcmahan2017}}}     & F1 & $84.81$ & $80.19$ & $72.23$  & $64.21$  & $53.62$      & $41.53$             \\ \cline{3-9} 
\multicolumn{1}{|l|}{}                            & \multicolumn{1}{l|}{}                        & Acc. & $83.54$ & $78.86$ & $68.34$ & $61.57$ & $48.31$ &   $34.71$                 \\ \hline 
\end{tabular}
}
\resizebox{.48\textwidth}{!}{
\begin{tabular}{|ccc|cccccc|}
\multicolumn{9}{c}{(b) Global performance}\\ \hline
\multicolumn{3}{|l|}{\textbf{Noise ratio} ($\beta_i$)}                                                                    & $\mathbf{0.1}$   & $\mathbf{0.2}$   & $\mathbf{0.3}$   & $\mathbf{0.4}$   & $\mathbf{0.5}$   & $\mathbf{0.6}$      \\ \hline
\multicolumn{1}{|l|}{\multirow{10}{*}{\rotatebox{90}{\textbf{Approach/Technique}}}} & \multicolumn{1}{l|}{\multirow{2}{*}{Fed-NL}} 
& F1 & $85.49$ & $80.43 $ & $74.92$ & $67.44$ & $58.21$ & $39.17$  \\ \cline{3-9} 
\multicolumn{1}{|l|}{}   & \multicolumn{1}{l|}{}   
& Acc. & $83.22$ & $77.52$ & $68.63$ & $61.29$  & $50.47$ & $30.03$ \\ \cline{2-9} 

\multicolumn{1}{|l|}{}     & \multicolumn{1}{l|}{\multirow{2}{*}{Base1~\cite{9412599}}}     
& F1 & $80.22$ & $73.92$ & $67.58$ & $58.42$ & $49.59$       & $30.22$              \\ \cline{3-9} 
\multicolumn{1}{|l|}{}                            & \multicolumn{1}{l|}{}                        
& Acc. & $78.49$  & $71.07$ & $60.29$ & $52.49$  & $41.29$   & $21.37$             \\ \cline{2-9} 
\multicolumn{1}{|l|}{}                            & \multicolumn{1}{l|}{\multirow{2}{*}{Base2~\cite{yang2020robust}}}     
& F1 &  $84.23$  & $78.55$ & $72.71$ & $63.92$ & $55.87$      & $36.34$             \\ \cline{3-9} 
\multicolumn{1}{|l|}{}                            & \multicolumn{1}{l|}{}                        
& Acc. &$82.81$ & $76.32$     & $70.11$ & $57.29$ & $46.38$  & $24.32$                    \\ \cline{2-9} 
\multicolumn{1}{|l|}{}                            & \multicolumn{1}{l|}{\multirow{2}{*}{Base3~\cite{chen2020focus}}}     
& F1 & $82.34$ & $76.91$ & $70.29$  & $60.59$  & $53.31$      & $33.82$             \\ \cline{3-9} 
\multicolumn{1}{|l|}{}                            & \multicolumn{1}{l|}{}                        
& Acc. & $80.93$   & $75.17$ & $67.36$ & $51.27$   & $43.29$      & $22.81$                    \\ \cline{2-9} 
\multicolumn{1}{|l|}{}                            & \multicolumn{1}{l|}{\multirow{2}{*}{Base4~\cite{mcmahan2017}}}     & F1 & $77.67$ & $71.53$ & $63.42$  & $56.93$  & $46.73$      & $27.21$             \\ \cline{3-9} 
\multicolumn{1}{|l|}{}  & \multicolumn{1}{l|}{}    & Acc. & $76.31$ & $69.42$ & $57.51$ & $50.71$ & $39.36$ &   $20.72$                 \\ \hline 
\end{tabular}
}
\end{table}

\subsubsection{Impact of noise ratio normalization}\label{e02}
This experiment aims to highlight the requirement of noise ratio normalization. We considered the collected LMR dataset, asymmetric noisy labels, noise ratio $0.2$, communication rounds $\mathcal{R}=120$ using (\ref{e11}), and local F1-score metric during the experiment. We used notations $\mathbf{a_1}$, $\mathbf{a_2}$, $\mathbf{a_3}$, $\mathbf{a_4}$, $\mathbf{a_5}$, and $\mathbf{a_6}$ to denote class labels bicycle, bike, car, auto-rickshaw, bus, and train of LMR dataset, respectively.

Figure~\ref{confusion}(a) illustrates the confusion matrix of the collected LMR dataset, where the noise ratio is un-normalized. The local F1-score of class $\mathbf{a_3}$ (''car'') is highest and class $\mathbf{a_5}$ (''bus'') is lowest. It is because the training instances of class $\mathbf{a_3}$ are less hampered with asymmetric distribution of noisy labels and vice-versa for class $\mathbf{a_5}$. The overall local F1-score without normalization is around $83\%$, less than the normalization case of around $89\%$ at a noise ratio of $0.2$. Figure~\ref{confusion}(b) depicts the confusion matrix with normalized noise ratio on the participant's dataset. The result illustrates that the normalization of the noise ratio improves the F1-score of all the classes in the dataset. The improvement in the F1-score of class $\mathbf{a_3}$ is minimum ($\approx 1.0\%$) as its instances are less hampered during noise insertion. However, the improvement on the F1-score of $\mathbf{a_1}$ is highest around $10\%$. During normalization, the participant gets larger correctly annotated instances from the server dataset for $\mathbf{a_1}$.\\
\noindent \textit{Observation: Noise ratio normalization is crucial in improving class-wise performance. It ensures the uniform distribution of noisy labels among different classes in the dataset. Fed-NL uses noise ratio normalization to exploit the correctly annotated data instances of the server's dataset to improve the performance.}

\begin{figure}[h]
        \centering
\begin{tabular}{cc}
\hspace{-.2cm}\includegraphics[scale=0.41]{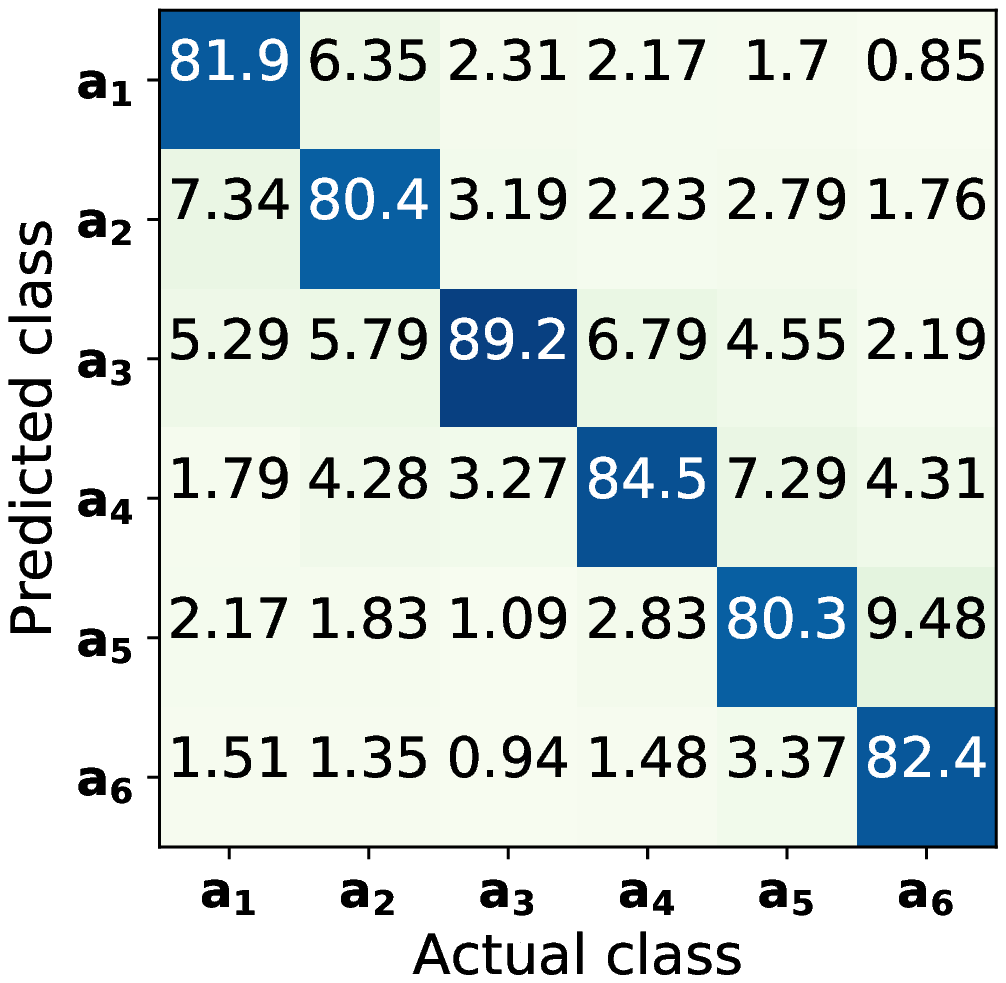}  &\hspace{-.1cm}   \includegraphics[scale=0.41]{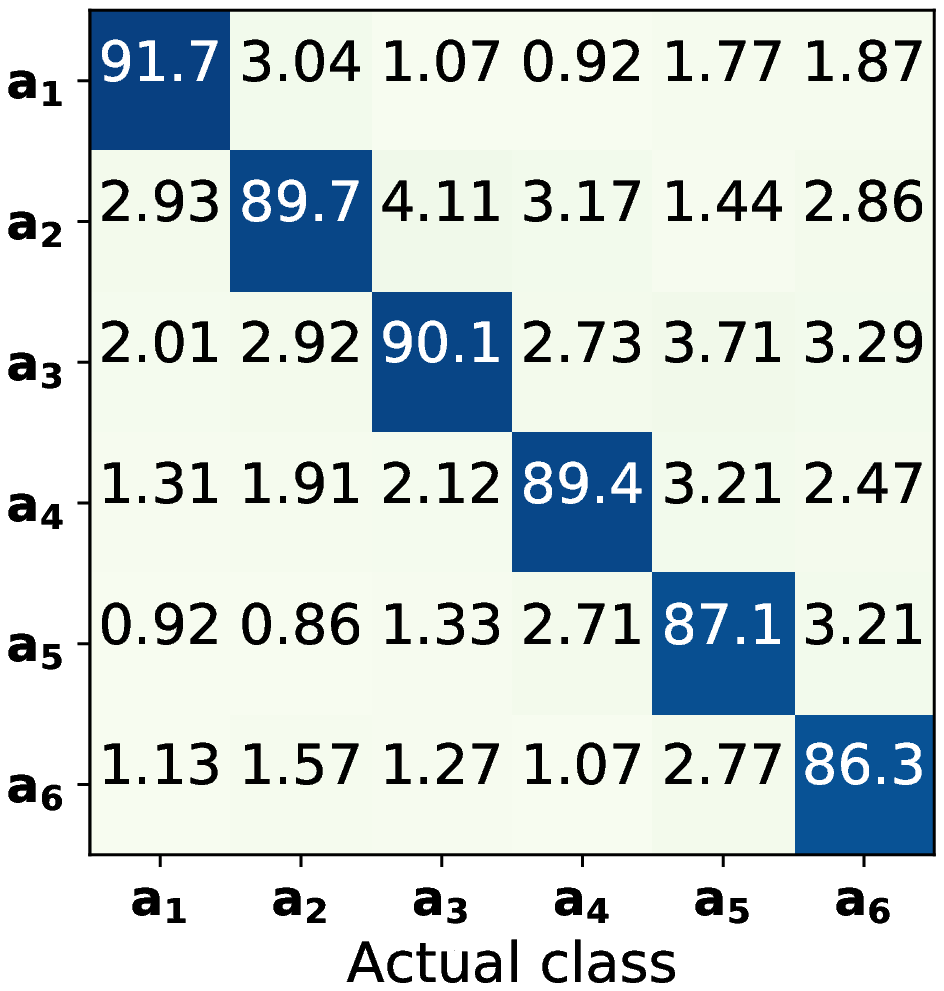}  \\ 
\small{(a) Un-normalized noise ratio.} &   \small{(b) Normalized noise ratio.} \\  
\end{tabular}
\vspace{-0.4cm}
        \caption{Impact of noise ratio normalization on local F1-score of a participant using collected LMR dataset and noise ratio $0.2$. } 
        \label{confusion}
\end{figure}

\subsubsection{Impact of dataset}\label{e03}
This experiment aims to determine the impact of LMR, SHL, HAR, FEMNIST, and CIFAR-10 datasets on performance. We considered the asymmetric noisy labels in datasets and rounds $\mathcal{R}$ as (Dataset $\rightarrow \mathcal{R}$): LMR $\rightarrow 120$, SHL$\rightarrow 120$, HAR$\rightarrow 100$, FEMNIST$\rightarrow 100$, and CIFAR-$10$$\rightarrow 150$.   

Table~\ref{dataset}(a) illustrates local F1-score decreases with the noise ratio for all the datasets. The reason behind this decrement is mentioned in Section~\ref{e01}. The F1-score of the SHL dataset decreases much more rapidly than that of the collected LMR. It is because the class labels in SHL possess imbalanced class labels that result in the non-uniform distribution of noisy labels, and performance is compromised. At a low noise ratio, \textit{e.g.,} $0.1$, the local F1-score on LMR and SHL datasets are comparable, as shown in Table~\ref{dataset}(a).  However, at a high noise ratio, \textit{e.g.,} $0.6$, the local F1-score on LMR is significantly higher than the SHL dataset. The impact of unbalanced class labels at a low noise ratio is less, which increases with the increase in the ratio. HAR, FEMNIST, and CIFAR-10 datasets have a large number of instances; thus, the model on these datasets achieves a higher F1-score, in contrast with LMR and SHL datasets. Moreover, Fed-NL achieved F1-score $>80\%$ even at the noise ratio of $0.6$ on these datasets. It is because the noisy labels are successfully handled by demanding data instances from the server. Similarly, Table~\ref{dataset}(b) illustrates the impact of different datasets on global F1-score.

\noindent \textit{Observation: An interesting observation from the result is that the unbalanced classes in the dataset with noisy labels result in higher performance compromise during training. It also indicates that if data instances of participants are noisy with unbalanced class labels, achieving higher performance is tedious. Therefore, an unbalanced dataset at the participant demands a significant amount of server data instances during normalization. Further, when the number of training instances in the dataset is high, the achieved F1-score is high. }

\begin{table}[h]
\caption{Illustration of impact of different datasets, collected LMR, SHL, HAR, FEMNIST, CIFAR-10, on local and global F1-scores.}
\label{dataset}
\centering
\resizebox{.47\textwidth}{!}{
\begin{tabular}{|cc|cccccc|}
\multicolumn{8}{c}{(a) Local F1-score}\\ \hline
\multicolumn{2}{|l|}{\textbf{Noise ratio}}                                                                    & $\mathbf{0.1}$   & $\mathbf{0.2}$   & $\mathbf{0.3}$   & $\mathbf{0.4}$   & $\mathbf{0.5}$   & $\mathbf{0.6}$      \\ \hline
\multicolumn{1}{|l}{\multirow{5}{*}{\rotatebox{90}{\textbf{Dataset}}}} 
& \multicolumn{1}{|l|}{\multirow{1}{*}{LMR}}  & $92.23$ & $88.53 $ & $81.43$ & $74.76$ & $65.43$ & $52.23$  \\ \cline{2-8} 
& \multicolumn{1}{|l|}{\multirow{1}{*}{SHL}}  & $91.31$ & $87.61 $ & $78.33$ & $69.87$ & $61.43$ & $48.07$  \\ \cline{2-8} 
& \multicolumn{1}{|l|}{\multirow{1}{*}{HAR}}  & $94.20$ & $92.64$  & $90.61$ & $88.93$ & $86.29$ & $84.11$   \\ \cline{2-8} 
& \multicolumn{1}{|l|}{\multirow{1}{*}{FEMNIST}}& $95.10$ & $93.27$  & $91.03$ & $89.20$ & $87.19$ & $86.37$   \\ \cline{2-8} 
& \multicolumn{1}{|l|}{\multirow{1}{*}{CIFAR-10}}  & $92.71$ & $90.39 $ & $89.71$ & $87.76$ & $85.73$ & $82.91$  \\ \hline
\end{tabular}
}
\resizebox{.47\textwidth}{!}{
\begin{tabular}{|cc|cccccc|}
\multicolumn{8}{c}{(b) Global F1-score}\\ \hline
\multicolumn{2}{|l|}{\textbf{Noise ratio}}                                                                    & $\mathbf{0.1}$   & $\mathbf{0.2}$   & $\mathbf{0.3}$   & $\mathbf{0.4}$   & $\mathbf{0.5}$   & $\mathbf{0.6}$      \\ \hline
\multicolumn{1}{|l}{\multirow{5}{*}{\rotatebox{90}{\textbf{Dataset}}}} 
& \multicolumn{1}{|l|}{\multirow{1}{*}{LMR}}  & $85.49$ & $80.43 $ & $74.93$ & $67.45$ & $58.21$ & $39.73$  \\ \cline{2-8} 
& \multicolumn{1}{|l|}{\multirow{1}{*}{SHL}}  & $84.25$ & $78.61 $ & $71.63$ & $64.81$ & $55.21$ & $34.07$  \\ \cline{2-8} 
& \multicolumn{1}{|l|}{\multirow{1}{*}{HAR}}  & $87.41$ & $84.51 $ & $80.93$ & $77.91$ & $75.73$ & $72.46$  \\ \cline{2-8} 
& \multicolumn{1}{|l|}{\multirow{1}{*}{FEMNIST}}  & $88.26$ & $86.23 $ & $82.61$ & $79.29$ & $77.43$ & $74.71$  \\ \cline{2-8} 
& \multicolumn{1}{|l|}{\multirow{1}{*}{CIFAR-10}}  & $86.22$ & $83.51 $ & $79.81$ & $77.32$ & $74.17$ & $71.44$  \\ \hline
\end{tabular}
}
\end{table} 

\subsubsection{Impact of weighted contribution}\label{e04}
The objective of this experiment is to illustrate the effectiveness of the Fed-NL in estimating the weighted contribution of the noisy participants. In doing so, we considered three schemes $\mathbf{S_1}$ (Base4 (FedAvg~\cite{mcmahan2017})), $\mathbf{S_2}$ (Base3~\cite{chen2020focus}), and $\mathbf{S_3}$ (Fed-NL). In this work, we assigned $2400$ instances of collected LMR and $1631$ instances of SHL on all $40$ participants' devices for local training. Noisy labels are explicitly inserted in the dataset of the participant. We also assumed $20$ out of $40$ participants have noisy labels, \textit{i.e.,} $50\%$ participants are noisy. We considered contribution ratio calculated as: \textit{contribution on $x$ noise ratio in dataset/contribution on noise-free dataset}, where $x=\{0.1, 0.2, \cdots, 0.6\}$. 

Figure~\ref{datasets}(a1) illustrates the weighted contribution of a participant at different noise ratios on the considered schemes, \textit{i.e.,} $\mathbf{S_1}$, $\mathbf{S_2}$, and $\mathbf{S_3}$ using LMR dataset. It shows that the weighted contribution of Scheme $\mathbf{S_1}$ is the same for any noise ratio. It is because the weighted contribution of $\mathbf{S_1}$ depends only on the number of local data instances. However, $\mathbf{S_2}$ and $\mathbf{S_3}$ schemes reduce the contribution with the increase in noise ratio. $\mathbf{S_2}$ used cross loss among server and participant to determine weighted contribution, whereas $\mathbf{S_3}$ (proposed) used noise ratio and participant influence to determine contribution. Figure~\ref{datasets}(a2) depicts the reduction in local F1-score due to the increase in noise ratio. The result illustrates that $\mathbf{S_3}$ and $\mathbf{S_1}$ achieved the highest and lowest performance, respectively. It is because the $\mathbf{S_1}$ did not incorporate any mechanism to handle noisy labels, whereas $\mathbf{S_3}$ used noise ratio normalization and effectively determined contribution. Figures~\ref{datasets}(b1) and~\ref{datasets}(b2) illustrate the contribution and local F1-score on SHL dataset, respectively. 

\noindent \textit{Observation: An interesting observation from the result is that estimating the optimal weighted contribution at different noise ratios is crucial to achieving significant performance. It also indicates that considering the influence of participant and noise ratio while assessing weighted contribution helped in improving the performance.}

\begin{figure}[h]
         \centering
\begin{tabular}{cc}
\hspace{-0.2cm}\includegraphics[scale=0.4]{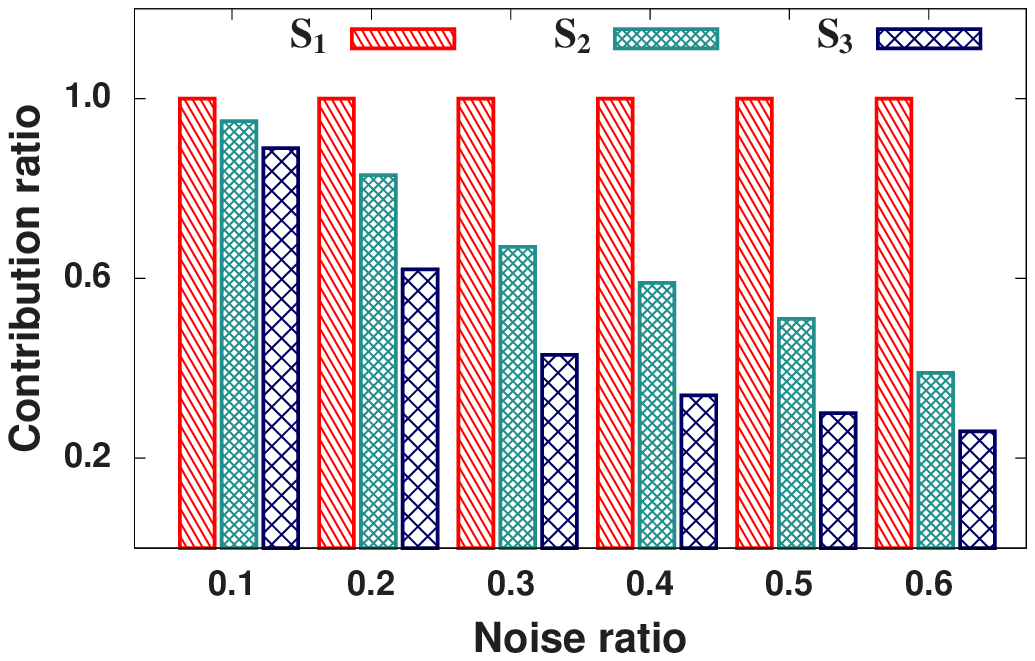}  & \hspace{-0.3cm} \includegraphics[scale=0.4]{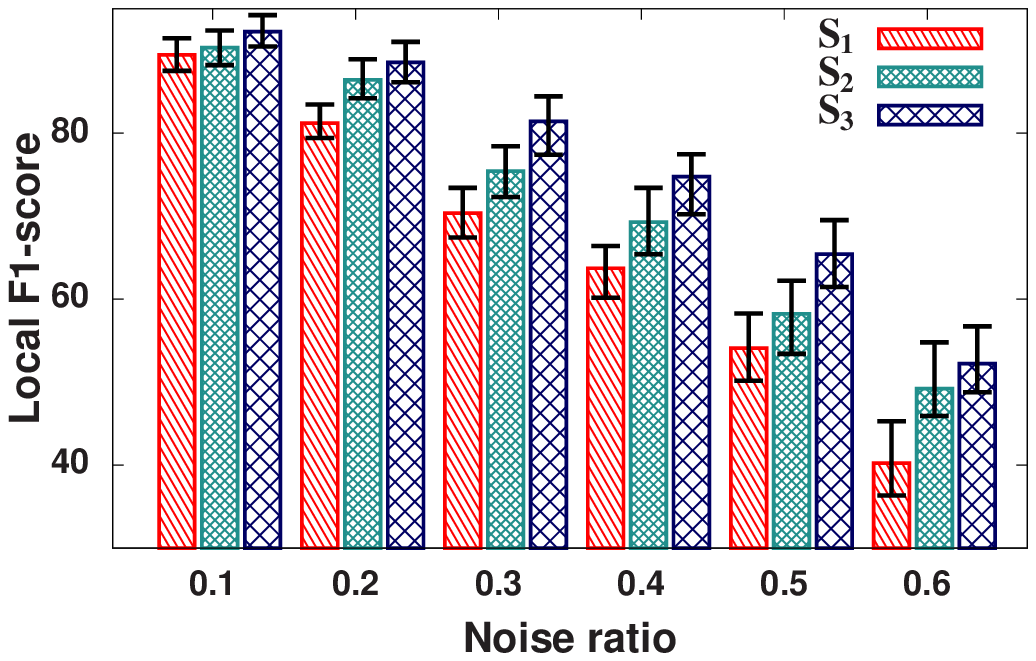} \\
\small{(a1) Contribution.} & \hspace{0.3cm} \small{(a2) Local F1-score.}\\
\multicolumn{2}{c}{\small{(a) LMR dataset.}}\\ 
\hspace{-0.2cm}\includegraphics[scale=0.4]{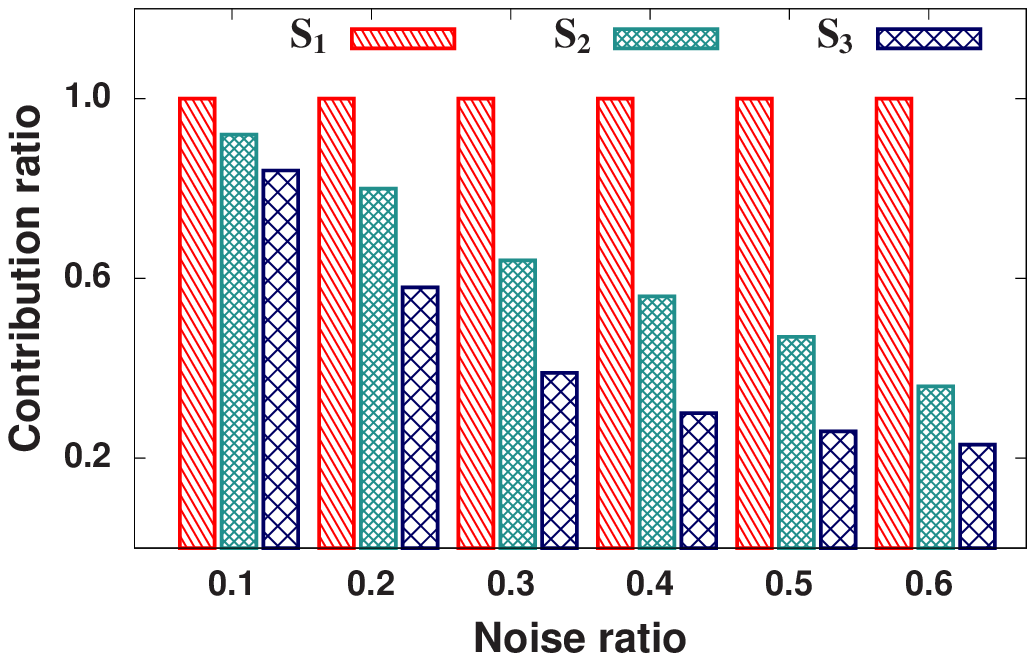}  & \hspace{-0.3cm} \includegraphics[scale=0.4]{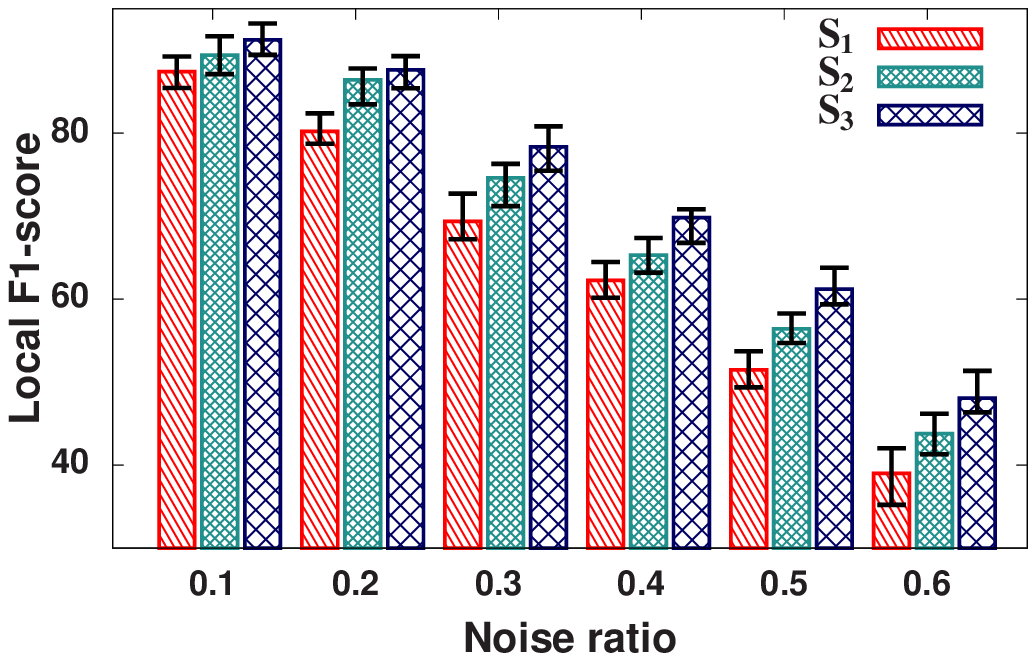} \\
\small{(b1) Contribution.} & \hspace{0.3cm} \small{(b2) Local F1-score.}\\
\multicolumn{2}{c}{\small{(b) SHL dataset.}}\\
\end{tabular}
\vspace{-0.5cm}
         \caption{Impact of weighted contribution on the datasets.} 
         \label{datasets}
         \vspace{-0.5cm}
 \end{figure}

\subsubsection{Number of communication rounds}\label{e05}
The objective of this experiment is to estimate the number of communication rounds (or global iterations) required for convergence of Fed-NL, Base1~\cite{9412599}, Base2~\cite{yang2020robust}, Base3~\cite{chen2020focus}, and Base4 (FedAvg~\cite{mcmahan2017}). We demonstrate the improvement in local accuracy by increasing the communication rounds on the collected LMR and SHL datasets at the noise ratio of $0.1$. We further extend the experiment to study the impact of noise ratio on communication rounds using the LMR dataset.

Figure~\ref{comm_round}(a) illustrates the impact of the communication round on the local accuracy of the participant using the collected LMR dataset. We observed a rapid increment in the local accuracy of the Fed-NL up to $43$ epochs and a marginal increment afterward. Later, we observed that after $60$ epochs, the performance improvement is negligible. It indicates that the Fed-NL converged after $60$ epochs and achieved an accuracy of $91.43\%$ at the noise ratio of $0.1$. Similarly, Base1 converged after $80$ epochs and achieved an accuracy of $84.21\%$. Base2 and Base3 acquired the local accuracy of $89.22\%$ and $86.51\%$, respectively, upon convergence. Base4 (FedAvg) required the highest number of communication rounds and achieved the least accuracy due to the absence of the noise handling mechanism. The higher performance and least epochs for convergence of Fed-NL are due to effective estimation of weighted contribution and noise ratio normalization via server dataset. Similar observation can be made for the SHL dataset, as shown in Figure~\ref{comm_round}(b).

\begin{figure}[h]
         \centering
\begin{tabular}{cc}
\hspace{-0.2cm}\includegraphics[scale=0.45]{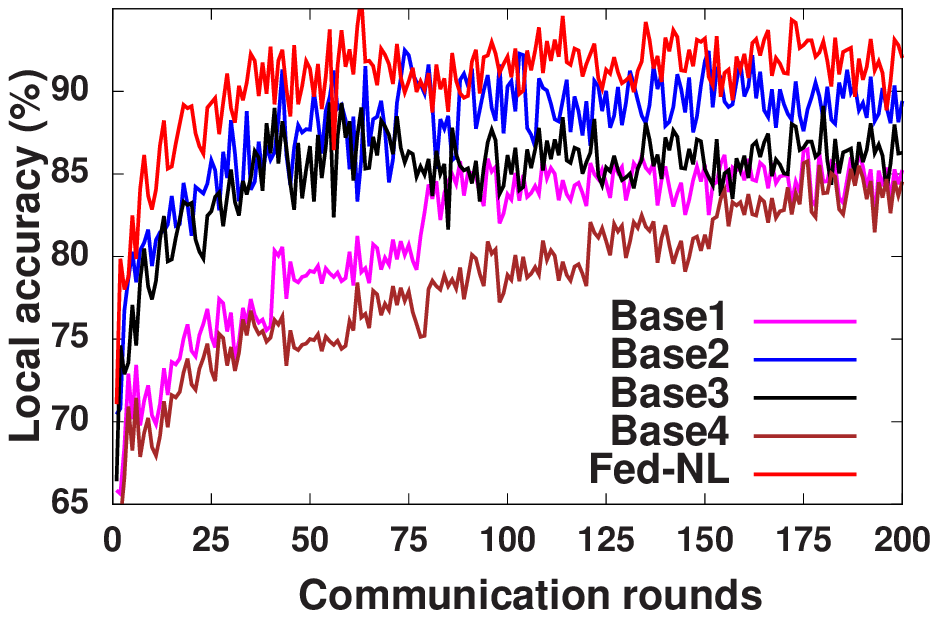}  & \hspace{-0.3cm} \includegraphics[scale=0.45]{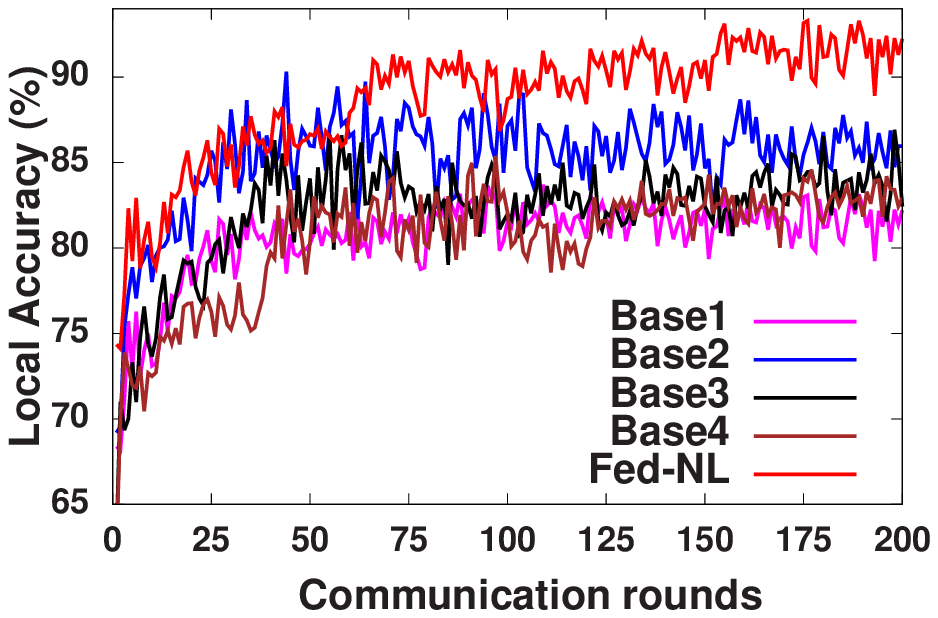} \\
\small{(a) LMR dataset.} & \hspace{0.3cm} \small{(b) SHL dataset.}\\
\end{tabular}
\vspace{-0.5cm}
         \caption{Communication rounds on Fed-NL, Base1~\cite{9412599}, Base2~\cite{yang2020robust}, Base3~\cite{chen2020focus}, and Base4 (FedAvg~\cite{mcmahan2017}) on LMR and SHL datasets.} 
         \label{comm_round}
         \vspace{-0.5cm}
 \end{figure}

Figure~\ref{comm_round1}(a) illustrates the impact of noise ratio on the global iterations (or communication rounds) using the LMR dataset. We fixed the value of desired precision $q_o=0.01$ and checked the efficiency of Fed-NL on different values of local epochs, \textit{i.e.,} $E=\{20, 40, 60\}$. The result shows that the increase in noise ratio increases the communication rounds, and a much higher increment is observed for the low value of local epochs. The result also demonstrates the communication round curve follows power-law till the noise ratio of $0.5$ and shows a linear increment afterward. The model converges to a sub-optimal state when the noise ratio is more than $0.5$. We further fixed the value of the local epoch to $20$. We estimated the desired precision on a fixed number of communication rounds ($100, 200, 300$), as shown in Figure~\ref{comm_round1}(b). We observe a similar pattern as Figure~\ref{comm_round1}(a), where desired precision increases with noise ratio.  

\noindent \textit{Observation: Noise ratio normalization and optimal weighted contribution are important factors to achieve high performance and minimize communication rounds. Using server dataset with noise-free labels mitigate the negative impact of the noisy labels in the dataset of the participants. Desired precision and local epochs play a decisive role while estimating the communication rounds for the convergence.}

\begin{figure}[h]
         \centering
\begin{tabular}{cc}
\hspace{-0.2cm}\includegraphics[scale=0.53]{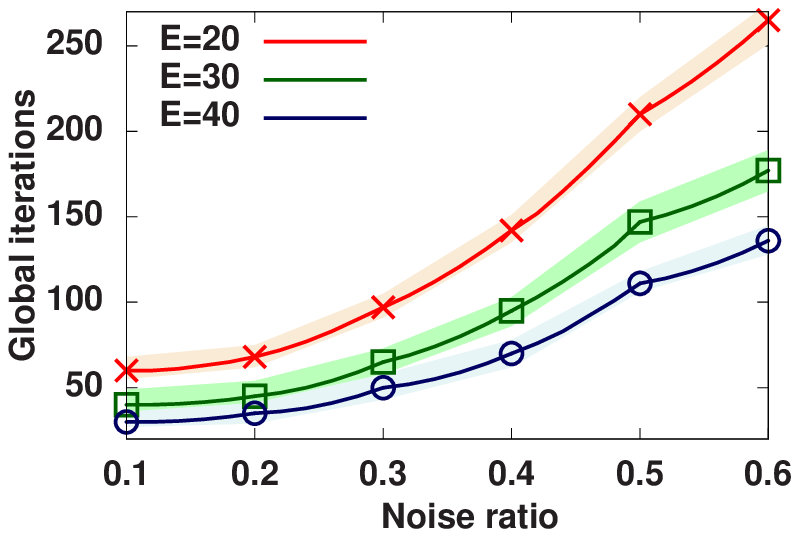}  & \hspace{-0.3cm} \includegraphics[scale=0.53]{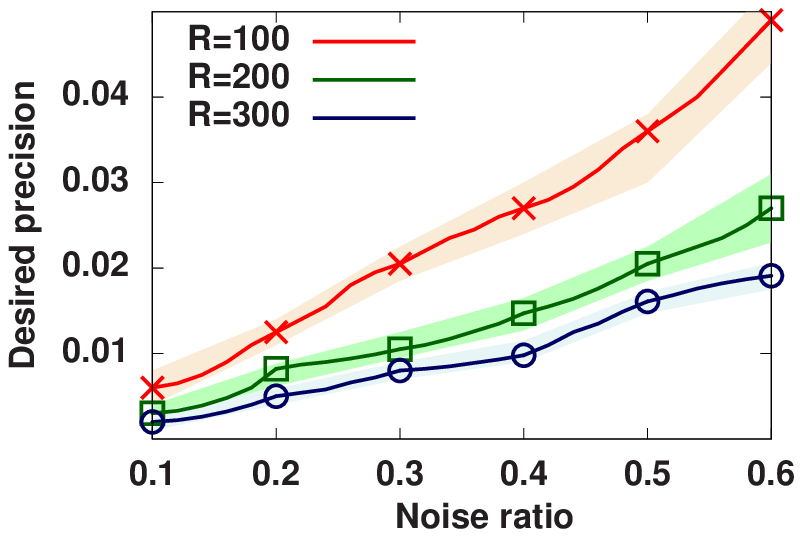} \\
\small{(a) Fixed precision.} & \hspace{0.3cm} \small{(b) Fixed local epoch.}\\
\end{tabular}
\vspace{-0.5cm}
         \caption{Impact of noise ratio on global iterations (communication rounds) with fixed precision and local epochs on LMR dataset.} 
         \label{comm_round1}
         \vspace{-0.5cm}
 \end{figure}

\section{Conclusion and Future work}\label{conc}
In this paper, we proposed a federated learning approach in the presence of noisy labels in the participants' datasets. Unlike the existing work, the Fed-NL trained the model on participants with noisy labels in their datasets. Fed-NL achieved significant performance by exploiting the server's data instances with noise-free labels. We proposed an algorithm for the Fed-NL and performed its convergence analysis, followed by an estimation of the communication rounds for convergence. We also did the evaluations to verify the effectiveness of Fed-NL in the built environment. 

We found the following conclusions from this work: federated learning work successfully only when the negative implications of noisy labels are effectively mitigated; correctly annotated server's dataset can improve participants' performance; the communication round depends upon the purity of the data annotation. Fed-NL technique considers the only noisy labels in the dataset of the participants. We will consider the noise in the input data instances along with labels. We also plan to consider different challenges, such as resource constraints and unbalanced datasets.

\bibliographystyle{ACM-Reference-Format}
\bibliography{refer}


\begin{thebibliography}{29}


\ifx \showCODEN    \undefined \def \showCODEN     #1{\unskip}     \fi
\ifx \showDOI      \undefined \def \showDOI       #1{#1}\fi
\ifx \showISBNx    \undefined \def \showISBNx     #1{\unskip}     \fi
\ifx \showISBNxiii \undefined \def \showISBNxiii  #1{\unskip}     \fi
\ifx \showISSN     \undefined \def \showISSN      #1{\unskip}     \fi
\ifx \showLCCN     \undefined \def \showLCCN      #1{\unskip}     \fi
\ifx \shownote     \undefined \def \shownote      #1{#1}          \fi
\ifx \showarticletitle \undefined \def \showarticletitle #1{#1}   \fi
\ifx \showURL      \undefined \def \showURL       {\relax}        \fi
\providecommand\bibfield[2]{#2}
\providecommand\bibinfo[2]{#2}
\providecommand\natexlab[1]{#1}
\providecommand\showeprint[2][]{arXiv:#2}

\bibitem[\protect\citeauthoryear{Anguita, Ghio, Oneto, Parra~Perez, and
  Reyes~Ortiz}{Anguita et~al\mbox{.}}{2013}]%
        {anguita2013public}
\bibfield{author}{\bibinfo{person}{Davide Anguita}, \bibinfo{person}{Alessandro
  Ghio}, \bibinfo{person}{Luca Oneto}, \bibinfo{person}{Xavier Parra~Perez},
  {and} \bibinfo{person}{Jorge~Luis Reyes~Ortiz}.}
  \bibinfo{year}{2013}\natexlab{}.
\newblock \showarticletitle{A public domain dataset for human activity}. In
  \bibinfo{booktitle}{\emph{Proc. ESANN}}. \bibinfo{pages}{437--442}.
\newblock


\bibitem[\protect\citeauthoryear{Caldas, Duddu, Wu, Li, Kone{\v{c}}n{\`y},
  McMahan, Smith, and Talwalkar}{Caldas et~al\mbox{.}}{2018}]%
        {caldas2018leaf}
\bibfield{author}{\bibinfo{person}{Sebastian Caldas}, \bibinfo{person}{Sai
  Meher~Karthik Duddu}, \bibinfo{person}{Peter Wu}, \bibinfo{person}{Tian Li},
  \bibinfo{person}{Jakub Kone{\v{c}}n{\`y}}, \bibinfo{person}{H~Brendan
  McMahan}, \bibinfo{person}{Virginia Smith}, {and} \bibinfo{person}{Ameet
  Talwalkar}.} \bibinfo{year}{2018}\natexlab{}.
\newblock \showarticletitle{Leaf: A benchmark for federated settings}.
\newblock \bibinfo{journal}{\emph{arXiv preprint arXiv:1812.01097}}
  (\bibinfo{year}{2018}).
\newblock


\bibitem[\protect\citeauthoryear{Chen, Liao, Chen, and Zhang}{Chen
  et~al\mbox{.}}{2019}]%
        {chen19g}
\bibfield{author}{\bibinfo{person}{Pengfei Chen}, \bibinfo{person}{Ben~Ben
  Liao}, \bibinfo{person}{Guangyong Chen}, {and} \bibinfo{person}{Shengyu
  Zhang}.} \bibinfo{year}{2019}\natexlab{}.
\newblock \showarticletitle{Understanding and Utilizing Deep Neural Networks
  Trained with Noisy Labels}. In \bibinfo{booktitle}{\emph{Proc. ICML}}.
  \bibinfo{pages}{1062--1070}.
\newblock


\bibitem[\protect\citeauthoryear{Chen, Yang, Qin, Yu, Chen, and Shen}{Chen
  et~al\mbox{.}}{2020}]%
        {chen2020focus}
\bibfield{author}{\bibinfo{person}{Yiqiang Chen}, \bibinfo{person}{Xiaodong
  Yang}, \bibinfo{person}{Xin Qin}, \bibinfo{person}{Han Yu},
  \bibinfo{person}{Biao Chen}, {and} \bibinfo{person}{Zhiqi Shen}.}
  \bibinfo{year}{2020}\natexlab{}.
\newblock \showarticletitle{Focus: Dealing with label quality disparity in
  federated learning}.
\newblock \bibinfo{journal}{\emph{arXiv preprint arXiv:2001.11359}}
  (\bibinfo{year}{2020}).
\newblock


\bibitem[\protect\citeauthoryear{Cohen, Afshar, Tapson, and Van~Schaik}{Cohen
  et~al\mbox{.}}{2017}]%
        {cohen2017emnist}
\bibfield{author}{\bibinfo{person}{Gregory Cohen}, \bibinfo{person}{Saeed
  Afshar}, \bibinfo{person}{Jonathan Tapson}, {and} \bibinfo{person}{Andre
  Van~Schaik}.} \bibinfo{year}{2017}\natexlab{}.
\newblock \showarticletitle{EMNIST: Extending MNIST to handwritten letters}. In
  \bibinfo{booktitle}{\emph{Proc. IEEE IJCNN}}. \bibinfo{pages}{2921--2926}.
\newblock


\bibitem[\protect\citeauthoryear{Duan, Liu, Chen, Tan, Ren, Qiao, and
  Liang}{Duan et~al\mbox{.}}{2019}]%
        {8988732}
\bibfield{author}{\bibinfo{person}{Moming Duan}, \bibinfo{person}{Duo Liu},
  \bibinfo{person}{Xianzhang Chen}, \bibinfo{person}{Yujuan Tan},
  \bibinfo{person}{Jinting Ren}, \bibinfo{person}{Lei Qiao}, {and}
  \bibinfo{person}{Liang Liang}.} \bibinfo{year}{2019}\natexlab{}.
\newblock \showarticletitle{Astraea: Self-Balancing Federated Learning for
  Improving Classification Accuracy of Mobile Deep Learning Applications}. In
  \bibinfo{booktitle}{\emph{Proc. IEEE ICCD}}. \bibinfo{pages}{246--254}.
\newblock


\bibitem[\protect\citeauthoryear{Gani, Raychoudhury, Edinger, Mokrenko, Cao,
  and Zhang}{Gani et~al\mbox{.}}{2019}]%
        {builtenv}
\bibfield{author}{\bibinfo{person}{Md~Osman Gani}, \bibinfo{person}{Vaskar
  Raychoudhury}, \bibinfo{person}{Janick Edinger}, \bibinfo{person}{Valeria
  Mokrenko}, \bibinfo{person}{Zheng Cao}, {and} \bibinfo{person}{Ce Zhang}.}
  \bibinfo{year}{2019}\natexlab{}.
\newblock \showarticletitle{Smart Surface Classification for Accessible Routing
  through Built Environment: A Crowd-Sourced Approach}. In
  \bibinfo{booktitle}{\emph{Proc. ACM BuildSys}}. \bibinfo{pages}{11–20}.
\newblock


\bibitem[\protect\citeauthoryear{Gao, Wang, Liu, Billah, and Campbell}{Gao
  et~al\mbox{.}}{2021}]%
        {gao2021decentralized}
\bibfield{author}{\bibinfo{person}{Jiechao Gao}, \bibinfo{person}{Wenpeng
  Wang}, \bibinfo{person}{Zetian Liu}, \bibinfo{person}{Md~Fazlay Rabbi~Masum
  Billah}, {and} \bibinfo{person}{Bradford Campbell}.}
  \bibinfo{year}{2021}\natexlab{}.
\newblock \showarticletitle{Decentralized federated learning framework for the
  neighborhood: a case study on residential building load forecasting}. In
  \bibinfo{booktitle}{\emph{Proc. SenSys}}. \bibinfo{pages}{453--459}.
\newblock


\bibitem[\protect\citeauthoryear{Han and Zhang}{Han and Zhang}{2020}]%
        {Han2020}
\bibfield{author}{\bibinfo{person}{Yufei Han} {and} \bibinfo{person}{Xiangliang
  Zhang}.} \bibinfo{year}{2020}\natexlab{}.
\newblock \showarticletitle{Robust Federated Learning via Collaborative Machine
  Teaching}. In \bibinfo{booktitle}{\emph{Proc. AAAI}}.
  \bibinfo{pages}{4075--4082}.
\newblock


\bibitem[\protect\citeauthoryear{Hendrycks, Mazeika, Wilson, and
  Gimpel}{Hendrycks et~al\mbox{.}}{2018}]%
        {hendrycks2018}
\bibfield{author}{\bibinfo{person}{Dan Hendrycks}, \bibinfo{person}{Mantas
  Mazeika}, \bibinfo{person}{Duncan Wilson}, {and} \bibinfo{person}{Kevin
  Gimpel}.} \bibinfo{year}{2018}\natexlab{}.
\newblock \showarticletitle{Using trusted data to train deep networks on labels
  corrupted by severe noise}. In \bibinfo{booktitle}{\emph{Proc. NIPS}}.
  \bibinfo{pages}{10456--10465}.
\newblock


\bibitem[\protect\citeauthoryear{Krizhevsky, Hinton, et~al\mbox{.}}{Krizhevsky
  et~al\mbox{.}}{2009}]%
        {krizhevsky2009learning}
\bibfield{author}{\bibinfo{person}{Alex Krizhevsky}, \bibinfo{person}{Geoffrey
  Hinton}, {et~al\mbox{.}}} \bibinfo{year}{2009}\natexlab{}.
\newblock \showarticletitle{Learning multiple layers of features from tiny
  images}.
\newblock  (\bibinfo{year}{2009}).
\newblock


\bibitem[\protect\citeauthoryear{Li, Fu, Han, Xu, and Shao}{Li
  et~al\mbox{.}}{2021}]%
        {li2021federated}
\bibfield{author}{\bibinfo{person}{Li Li}, \bibinfo{person}{Huazhu Fu},
  \bibinfo{person}{Bo Han}, \bibinfo{person}{Cheng-Zhong Xu}, {and}
  \bibinfo{person}{Ling Shao}.} \bibinfo{year}{2021}\natexlab{}.
\newblock \showarticletitle{Federated Noisy Client Learning}.
\newblock \bibinfo{journal}{\emph{arXiv preprint arXiv:2106.13239}}
  (\bibinfo{year}{2021}).
\newblock


\bibitem[\protect\citeauthoryear{Li, Sahu, Zaheer, Sanjabi, Talwalkar, and
  Smith}{Li et~al\mbox{.}}{2020}]%
        {li2020federated}
\bibfield{author}{\bibinfo{person}{Tian Li}, \bibinfo{person}{Anit~Kumar Sahu},
  \bibinfo{person}{Manzil Zaheer}, \bibinfo{person}{Maziar Sanjabi},
  \bibinfo{person}{Ameet Talwalkar}, {and} \bibinfo{person}{Virginia Smith}.}
  \bibinfo{year}{2020}\natexlab{}.
\newblock \showarticletitle{Federated optimization in heterogeneous networks}.
\newblock \bibinfo{journal}{\emph{Proc. MLSys}}  \bibinfo{volume}{2}
  (\bibinfo{year}{2020}), \bibinfo{pages}{429--450}.
\newblock


\bibitem[\protect\citeauthoryear{Li, Huang, Yang, Wang, and Zhang}{Li
  et~al\mbox{.}}{2019}]%
        {li2019convergence}
\bibfield{author}{\bibinfo{person}{Xiang Li}, \bibinfo{person}{Kaixuan Huang},
  \bibinfo{person}{Wenhao Yang}, \bibinfo{person}{Shusen Wang}, {and}
  \bibinfo{person}{Zhihua Zhang}.} \bibinfo{year}{2019}\natexlab{}.
\newblock \showarticletitle{On the Convergence of FedAvg on Non-IID Data}. In
  \bibinfo{booktitle}{\emph{Proc. ICLR}}. \bibinfo{pages}{1--26}.
\newblock


\bibitem[\protect\citeauthoryear{Liu and Tao}{Liu and Tao}{2016}]%
        {7159100}
\bibfield{author}{\bibinfo{person}{Tongliang Liu} {and}
  \bibinfo{person}{Dacheng Tao}.} \bibinfo{year}{2016}\natexlab{}.
\newblock \showarticletitle{Classification with Noisy Labels by Importance
  Reweighting}.
\newblock \bibinfo{journal}{\emph{IEEE Transactions on Pattern Analysis and
  Machine Intelligence}} \bibinfo{volume}{38}, \bibinfo{number}{3}
  (\bibinfo{year}{2016}), \bibinfo{pages}{447--461}.
\newblock


\bibitem[\protect\citeauthoryear{Luo, Li, Wang, Huang, and Tassiulas}{Luo
  et~al\mbox{.}}{2021}]%
        {luo2021cost}
\bibfield{author}{\bibinfo{person}{Bing Luo}, \bibinfo{person}{Xiang Li},
  \bibinfo{person}{Shiqiang Wang}, \bibinfo{person}{Jianwei Huang}, {and}
  \bibinfo{person}{Leandros Tassiulas}.} \bibinfo{year}{2021}\natexlab{}.
\newblock \showarticletitle{Cost-effective federated learning design}. In
  \bibinfo{booktitle}{\emph{Proc. IEEE INFOCOM}}. \bibinfo{pages}{1--10}.
\newblock


\bibitem[\protect\citeauthoryear{McMahan, Moore, Ramage, Hampson, and
  y~Arcas}{McMahan et~al\mbox{.}}{2017}]%
        {mcmahan2017}
\bibfield{author}{\bibinfo{person}{Brendan McMahan}, \bibinfo{person}{Eider
  Moore}, \bibinfo{person}{Daniel Ramage}, \bibinfo{person}{Seth Hampson},
  {and} \bibinfo{person}{Blaise~Aguera y Arcas}.}
  \bibinfo{year}{2017}\natexlab{}.
\newblock \showarticletitle{Communication-efficient learning of deep networks
  from decentralized data}. In \bibinfo{booktitle}{\emph{Proc. AISTATS}}.
  \bibinfo{pages}{1273--1282}.
\newblock


\bibitem[\protect\citeauthoryear{{Mishra}, {Gupta}, {Gupta}, and
  {Dutta}}{{Mishra} et~al\mbox{.}}{2020}]%
        {9164991}
\bibfield{author}{\bibinfo{person}{R. {Mishra}}, \bibinfo{person}{A. {Gupta}},
  \bibinfo{person}{H.~P. {Gupta}}, {and} \bibinfo{person}{T. {Dutta}}.}
  \bibinfo{year}{2020}\natexlab{}.
\newblock \showarticletitle{A Sensors based Deep Learning Model for Unseen
  Locomotion Mode Identification using Multiple Semantic Matrices}.
\newblock \bibinfo{journal}{\emph{IEEE Transactions on Mobile Computing}}
  (\bibinfo{year}{2020}), \bibinfo{pages}{1--1}.
\newblock
\newblock
\shownote{doi: \url{10.1109/TMC.2020.3015546}.}


\bibitem[\protect\citeauthoryear{Sater and Hamza}{Sater and Hamza}{2021}]%
        {sater2021federated}
\bibfield{author}{\bibinfo{person}{Raed~Abdel Sater} {and}
  \bibinfo{person}{A~Ben Hamza}.} \bibinfo{year}{2021}\natexlab{}.
\newblock \showarticletitle{A federated learning approach to anomaly detection
  in smart buildings}.
\newblock \bibinfo{journal}{\emph{ACM Transactions on Internet of Things}}
  \bibinfo{volume}{2}, \bibinfo{number}{4} (\bibinfo{year}{2021}),
  \bibinfo{pages}{1--23}.
\newblock


\bibitem[\protect\citeauthoryear{Sattler, Korjakow, Rischke, and Samek}{Sattler
  et~al\mbox{.}}{2021}]%
        {9632275}
\bibfield{author}{\bibinfo{person}{Felix Sattler}, \bibinfo{person}{Tim
  Korjakow}, \bibinfo{person}{Roman Rischke}, {and} \bibinfo{person}{Wojciech
  Samek}.} \bibinfo{year}{2021}\natexlab{}.
\newblock \showarticletitle{FEDAUX: Leveraging Unlabeled Auxiliary Data in
  Federated Learning}.
\newblock \bibinfo{journal}{\emph{IEEE Transactions on Neural Networks and
  Learning Systems}} (\bibinfo{year}{2021}), \bibinfo{pages}{1--13}.
\newblock
\newblock
\shownote{doi: \url{10.1109/TNNLS.2021.3129371}.}


\bibitem[\protect\citeauthoryear{{SHL Challenge}}{{SHL Challenge}}{2022}]%
        {shl2}
\bibfield{author}{\bibinfo{person}{{SHL Challenge}}.}
  \bibinfo{year}{2022}\natexlab{}.
\newblock
\newblock
\urldef\tempurl%
\url{http://www.shl-dataset.org/activity-recognition-challenge/}
\showURL{%
\tempurl}


\bibitem[\protect\citeauthoryear{Tuor, Wang, Ko, Liu, and Leung}{Tuor
  et~al\mbox{.}}{2021}]%
        {9412599}
\bibfield{author}{\bibinfo{person}{Tiffany Tuor}, \bibinfo{person}{Shiqiang
  Wang}, \bibinfo{person}{Bong~Jun Ko}, \bibinfo{person}{Changchang Liu}, {and}
  \bibinfo{person}{Kin~K. Leung}.} \bibinfo{year}{2021}\natexlab{}.
\newblock \showarticletitle{Overcoming Noisy and Irrelevant Data in Federated
  Learning}. In \bibinfo{booktitle}{\emph{Proc. ICPR}}.
  \bibinfo{pages}{5020--5027}.
\newblock


\bibitem[\protect\citeauthoryear{Wang, Liu, Liang, Joshi, and Poor}{Wang
  et~al\mbox{.}}{2020}]%
        {tackling}
\bibfield{author}{\bibinfo{person}{Jianyu Wang}, \bibinfo{person}{Qinghua Liu},
  \bibinfo{person}{Hao Liang}, \bibinfo{person}{Gauri Joshi}, {and}
  \bibinfo{person}{H.~Vincent Poor}.} \bibinfo{year}{2020}\natexlab{}.
\newblock \showarticletitle{Tackling the Objective Inconsistency Problem in
  Heterogeneous Federated Optimization}. In \bibinfo{booktitle}{\emph{Proc.
  NIPS}}. \bibinfo{pages}{7611--7623}.
\newblock


\bibitem[\protect\citeauthoryear{Wang, Xu, Wang, and Zhu}{Wang
  et~al\mbox{.}}{2021}]%
        {wang2021addressing}
\bibfield{author}{\bibinfo{person}{Lixu Wang}, \bibinfo{person}{Shichao Xu},
  \bibinfo{person}{Xiao Wang}, {and} \bibinfo{person}{Qi Zhu}.}
  \bibinfo{year}{2021}\natexlab{}.
\newblock \showarticletitle{Addressing class imbalance in federated learning}.
  In \bibinfo{booktitle}{\emph{Proc. AAAI}}, Vol.~\bibinfo{volume}{35}.
  \bibinfo{pages}{10165--10173}.
\newblock


\bibitem[\protect\citeauthoryear{Wang, Tuor, Salonidis, Leung, Makaya, He, and
  Chan}{Wang et~al\mbox{.}}{2019}]%
        {wang2019adaptive}
\bibfield{author}{\bibinfo{person}{Shiqiang Wang}, \bibinfo{person}{Tiffany
  Tuor}, \bibinfo{person}{Theodoros Salonidis}, \bibinfo{person}{Kin~K Leung},
  \bibinfo{person}{Christian Makaya}, \bibinfo{person}{Ting He}, {and}
  \bibinfo{person}{Kevin Chan}.} \bibinfo{year}{2019}\natexlab{}.
\newblock \showarticletitle{Adaptive federated learning in resource constrained
  edge computing systems}.
\newblock \bibinfo{journal}{\emph{IEEE Journal on Selected Areas in
  Communications}} \bibinfo{volume}{37}, \bibinfo{number}{6}
  (\bibinfo{year}{2019}), \bibinfo{pages}{1205--1221}.
\newblock


\bibitem[\protect\citeauthoryear{Xue, Niu, Zheng, Tang, Lyu, Wu, and Chen}{Xue
  et~al\mbox{.}}{2021}]%
        {xue2021toward}
\bibfield{author}{\bibinfo{person}{Yihao Xue}, \bibinfo{person}{Chaoyue Niu},
  \bibinfo{person}{Zhenzhe Zheng}, \bibinfo{person}{Shaojie Tang},
  \bibinfo{person}{Chengfei Lyu}, \bibinfo{person}{Fan Wu}, {and}
  \bibinfo{person}{Guihai Chen}.} \bibinfo{year}{2021}\natexlab{}.
\newblock \showarticletitle{Toward Understanding the Influence of Individual
  Clients in Federated Learning}. In \bibinfo{booktitle}{\emph{Proc. AAAI}}.
  \bibinfo{pages}{10560--10567}.
\newblock


\bibitem[\protect\citeauthoryear{Yang, Fang, and Liu}{Yang
  et~al\mbox{.}}{2021}]%
        {yang2021achieving}
\bibfield{author}{\bibinfo{person}{Haibo Yang}, \bibinfo{person}{Minghong
  Fang}, {and} \bibinfo{person}{Jia Liu}.} \bibinfo{year}{2021}\natexlab{}.
\newblock \showarticletitle{Achieving Linear Speedup with Partial Worker
  Participation in Non-IID Federated Learning}.
\newblock \bibinfo{journal}{\emph{Proc. ICLR}} (\bibinfo{year}{2021}),
  \bibinfo{pages}{1--23}.
\newblock


\bibitem[\protect\citeauthoryear{Yang, Park, Byun, and Kim}{Yang
  et~al\mbox{.}}{2022}]%
        {yang2020robust}
\bibfield{author}{\bibinfo{person}{Seunghan Yang}, \bibinfo{person}{Hyoungseob
  Park}, \bibinfo{person}{Junyoung Byun}, {and} \bibinfo{person}{Changick
  Kim}.} \bibinfo{year}{2022}\natexlab{}.
\newblock \showarticletitle{Robust Federated Learning with Noisy Labels}.
\newblock \bibinfo{journal}{\emph{IEEE Intelligent Systems}}
  (\bibinfo{year}{2022}), \bibinfo{pages}{1--10}.
\newblock
\newblock
\shownote{doi: \url{10.1109/MIS.2022.3151466}.}


\bibitem[\protect\citeauthoryear{Yu, Liu, Gong, Batmanghelich, and Tao}{Yu
  et~al\mbox{.}}{2018}]%
        {yu2018efficient}
\bibfield{author}{\bibinfo{person}{Xiyu Yu}, \bibinfo{person}{Tongliang Liu},
  \bibinfo{person}{Mingming Gong}, \bibinfo{person}{Kayhan Batmanghelich},
  {and} \bibinfo{person}{Dacheng Tao}.} \bibinfo{year}{2018}\natexlab{}.
\newblock \showarticletitle{An efficient and provable approach for mixture
  proportion estimation using linear independence assumption}. In
  \bibinfo{booktitle}{\emph{Proc. CVPR}}. \bibinfo{pages}{4480--4489}.
\newblock


\end{thebibliography}
\end{document}